\theoremstyle{plain}
\newtheorem{thm}{Theorem}
\newtheorem{lem}[thm]{Lemma}
\newtheorem*{lem*}{Lemma}
\theoremstyle{definition}
\newcommand{\R}{\mathbb R}
\newcommand{\bA}{\mathbf A}
\newcommand{\bB}{\mathbf B}
\newcommand{\bM}{\mathbf M}
\newcommand{\bK}{\mathbf K}
\newcommand{\bC}{\mathbf C}
\newcommand{\bI}{\mathbf I}
\newcommand{\bX}{\mathbf X}
\newcommand{\bx}{\mathbf x}
\newcommand{\bb}{\mathbf b}
\newcommand{\ba}{\mathbf a}
\newcommand{\bc}{\mathbf c}
\newcommand{\eps}{\varepsilon}
\newcommand{\cN}{\mathcal N}
\newcommand{\cR}{\mathcal R}
\newcommand{\cB}{\mathcal B}
\newcommand{\cV}{\mathcal V}
\newcommand{\iZ}{\textbf{\em Z}}
\newcommand{\FDRR}{\textsc{\sffamily \small FDrr}\xspace}
\newcommand{\RFDRR}{\textsc{\sffamily \small RFDrr}\xspace}
\newcommand{\CSRR}{\textsc{\sffamily \small CSrr}\xspace}
\newcommand{\RPRR}{\textsc{\sffamily \small RPrr}\xspace}
\newcommand{\RR}{\textsc{\sffamily \small rr}\xspace}
\newcommand{\NOFDRR}{\textsc{\sffamily \small 2LFDrr}\xspace}
\newcommand{\iSVDRR}{\textsc{\sffamily \small iSVDrr}\xspace}
\newcommand{\LR}{{\sffamily \small LR}\xspace}
\newcommand{\HR}{{\sffamily \small HR}\xspace}
\newcommand{\TEMP}{{\sffamily \small TEMP}\xspace}
\begin{document}

%

%

\twocolumn[

\aistatstitle{A Deterministic Streaming Sketch for Ridge Regression}

\aistatsauthor{ Benwei Shi \And Jeff M. Phillips }

\aistatsaddress{ University of Utah \And  University of Utah } ]

\begin{abstract}
We provide a deterministic space-efficient algorithm for estimating ridge regression.
For $n$ data points with $d$ features and a large enough regularization parameter,
we provide a solution within $\eps$ L$_2$ error using only $O(d/\eps)$ space.
This is the first $o(d^2)$ space deterministic streaming algorithm with
guaranteed solution error and risk bound for this classic problem.
The algorithm sketches the covariance matrix by variants of Frequent Directions,
which implies it can operate in insertion-only streams and a variety of distributed data settings.
In comparisons to randomized sketching algorithms on synthetic and real-world datasets,
our algorithm has less empirical error using less space and similar time.
\end{abstract}

\section{\uppercase{Introduction}}

Linear regression is one of the canonical problems in machine learning.
Given $n$ pairs $(\ba_i, b_i)$ with each $\ba_i \in \R^d$ and $b \in \R$,
we can accumulate them into a matrix $\bA \in \R^{n \times d}$ and vector $\mathbf \bb \in \R^n$.
The goal is to find $\bx_0 = \arg\min_{\bx \in \R^d}  \|\bA \bx - \bb \|_2^2$.
It has a simple solution $\bx_0 = \bA^\dagger \bb$ where $\bA^\dagger$ is the pseudoinverse of $\bA$.
The most common robust variant, ridge regression~\citep{HK70},
uses a regularization parameter $\gamma > 0$ to add a squared $\ell_2$ regularizer on $\bx$.
Its goal is
\[
\bx_\gamma = \arg\min_{\bx \in \R^d} \left( \|\bA \bx - \bb \|^2 + \gamma \|\bx\|^2 \right).
\]
This also has simple solutions as
\[\bx_\gamma = \left\{\begin{array}{ll}
(\bA^\top \bA + \gamma \bI)^{-1} \bA^\top \bb, & \text{when $n \ge d$,}\\
\bA^\top (\bA \bA^\top + \gamma \bI)^{-1} \bb, & \text{when $n \le d$,}
\end{array}\right. \]
where $\bI$ is the identity matrix.
The regularization and using $\gamma \bI$ makes regression robust to noise (by reducing the variance),
improves generalization, and avoids ill-conditioning.  

However, this problem is difficult under very large data settings because the
inverse operation and standard matrix multiplication will take $O(d^3 + nd^2)$ time,
which is $O(nd^2)$ under our assumption $n > d$.
And this can also be problematic if the size of $\bA$, at $O(nd)$ space, exceeds memory.
In a stream this can be computed in $O(d^2)$ space by accumulating
$\bA^\top \bA = \sum_i \ba_i^\top \ba_i$ and $\bA^\top \bb = \sum_i \ba_i^\top b_i$.  

\subsection{Previous Sketches}
As a central task in data analysis, significant effort has gone into improving
the running time of least squares (ridge) regression. 
Most improvements are in the form of sketching methods using projection or sampling.
\citet{Sarlos06RP} initiated the formal study of using Random Projections (RP)
for regression to reduce $n$ dimensions to $\ell$ dimensions (still $\ell > d$)
preserving the norm of the $d$ dimension subspace vectors with high probability.
\citet{CW13} extended this technique to runtime depending on the number-of-non-zeroes,
for sparse inputs, with CountSketch (CS).
In non-streaming settings, the space can be reduced to depend on the rank
$r = \mathrm{rank}(\bA)$ in the place of the full dimension.  
\citet{LDFU13} used a different random linear transform, called SRHT,
and the dependence on the error was improved by \citet{CLL+15}.  

These random linear transform methods need a randomly selected subspace embedding with dimension $\ell$,
and the resulting sketches have size $O(\ell d)$.
In the resulting analysis, the value $\ell$ should be greater than $d$ or (if not streaming) $r$.  
If one strictly adheres to this theory, the large space bounds make the methods
impractical when $d$ is large and/or when requiring a high degree of accuracy
(i.e., with small error parameter $\eps$).   
One could of course still use the above methods to project to a small dimension
with $\ell < d$ (as we do in our experiments), but no guarantees are known.  

\citet{MM18} proposed deterministic but not streaming ridge leverage score sampling. 
\citet{cohen_et_al:2016:ORS,Cohen_et_al:2017:IST} proposed streaming but not
deterministic ridge leverage score sampling, relying on sketching techniques like Frequent Directions.
In particular, their algorithms are strictly more complicated than the ones we will present,
relying on additional randomized steps (ridge leverage score sampling) and
analysis beyond the techniques we will employ.  
In particular, the computation of leverage scores depends on $(\bA^\top \bA + \gamma \bI)^{-1}$,
which is also the key for the solution of ridge regression.
These approaches can provide ``risk'' bounds (defined formally later),
where the expected solution error is bounded under a Gaussian noise assumption.  
Recently, \citet{WGM18} re-analyzed the quality of these previous linear ridge
regression sketches from two related views: the optimization view (errors on
objective function $f(\bx) = \|\bA \bx - \bb\|^2 + n\gamma \|\bx\|^2$) and the
statistics view (bias and variance of the solutions $\bx$), but this work does
not specifically improve the space or streaming analysis we focus on.  

Although some of these sketches can be made streaming, if they use $o(d^2)$
space (so beating the simple $O(d^2)$ approach),
they either do not provably approximate the solution coefficients, or are not streaming.   
And no existing streaming $o(d^2)$ space algorithm with any provable accuracy guarantees is deterministic.   

\subsection{Our Results}
We make the observation, that if the goal is to approximate the solution to ridge regression,
instead of ordinary least squares regression, and the regularization parameter is large enough,
then a Frequent-Directions-based sketch (which only requires a single streaming pass)
can preserve $(1 \pm \eps)$-relative error on the solution parameters with only roughly $\ell = O(1/\eps)$ rows.
Thus it uses only $O(d \ell) = O(d/\eps) = o(d^2)$ space.  
In contrast, streaming methods based on random linear transforms require
$\ell = \Omega(1/\eps^2)$ for similar guarantees. 
We formalize and prove this (see Theorems \ref{thm:fd} and \ref{thm:rfd} for more nuanced statements),
show evidence that this cannot be improved, and demonstrate empirically that
indeed the FD-based sketch can significantly outperform random-projection-based
sketches -- especially in the space/error trade-off.

\section{\uppercase{Frequent Directions}}
\label{sec:FD}

\citet{Liberty:2013:SDM} introduced Frequent Directions (FD), then together
with \citet{GhashamiSICOMP16FD} improved the analysis.
It considers a tall matrix $\bA \in \mathbb R^{n \times d}$ (with $n \gg d$)
row by row in a stream.   It uses limited space $O(\ell d)$ to compute a short
sketch matrix $\mathbf B \in \mathbb R^{\ell \times d}$, such that the
covariance error is relatively small compared to the optimal rank $k$
approximation, $\left\|\bA^\top \bA - \mathbf B^\top \mathbf B\right\|_2 \le \eps \left\|\bA - \bA_k\right\|_F^2$.
The algorithm maintains a sketch matrix $\mathbf B \in \mathbb R^{\ell \times d}$
representing the approximate right singular values of $\bA$, scaled by the singular values.
Specifically, it appends a batch of $O(\ell)$ new rows to $\mathbf B$,
computes the SVD of $\mathbf B$, subtracts the squared $\ell$th singular value
from all squared singular values (or marks down to $0$), and then updates
$\mathbf B$ as the reduced first $(\ell-1)$ singular values and right singular vectors.
After each update, $\mathbf B$ has at most $\ell-1$ rows.  
After all rows of $\bA$, for all $k < \ell$:
\begin{equation}\label{eq:fd-err}
    \left\|\bA^\top \bA - \mathbf B^\top \mathbf B\right\|_2 \le \frac 1 {\ell - k} \left\|\bA - \bA_k\right\|_F^2. 
\end{equation} 

The running time is $O(n d \ell)$ and required space is $O(\ell d)$. 
By setting $\ell = k + 1/\eps$, it achieves $\eps \|\bA - \bA_k\|_F^2$
covariance error, in time $O(nd(k + 1/\eps))$ and in space $O((k + 1/\eps)d)$. 
Observe that setting $\ell = \mathrm{rank}(A)+1$ achieves $0$ error in the form stated above.  

Recently, \citet{Luo2019} proposed Robust Frequent Direction (RFD).
They slightly extend FD by maintaining an extra value 
$\alpha \geq 0$, 
which is half of the sum of all squared $\ell$th singular values.
Adding $\alpha$ back to the covariance matrix results in a more robust solution and less\ error.
For all $0 \leq k < \ell$:
\begin{equation}\label{eq:rfd-err}
\left\|\bA^\top \bA - \mathbf B^\top \mathbf B - \alpha \bI \right\|_2 \le \frac 1 {2(\ell - k)} \left\|\bA - \bA_k\right\|_F^2.
\end{equation}

It has same running time and running space with FD in terms of $\ell$. 
To guarantee the same error, RFD needs almost a factor $2$ fewer rows $\ell = 1/(2 \eps) + k$.  

\citet{HuangPMLR18} proposed a more complicated variant to separate $n$ from $1/\eps$ in the running time. 
The idea is two level sketching: not only sketch $\mathbf B \in \mathbb R^{3k \times d}$,
but also sketch the removed part into $\mathbf Q \in \mathbb R^{1/\eps \times d}$ via sampling.
Note that for a fixed $k$, $\mathbf B$ has a fixed number of rows,
only $\mathbf Q$ increases the number of rows to reduce the error bound,
and the computation of $\mathbf Q$ is faster and more coarse than that of $\mathbf B$.
With high probability, for a fixed $k$,
the sketch $\mathbf B^\top \mathbf B + \mathbf Q^\top \mathbf Q$ achieves the
error in (\ref{eq:fd-err}) in time $O(ndk) + \tilde O(\eps^{-3}d)$ using space $O((k+\eps^{-1})d)$. 
By setting $\ell = 3k + 1/\eps$, the running time is
$O(nkd) + \tilde O((\ell-k)^{3}d)$ and the space is $O(\ell d)$

The Frequent Directions sketch has other nice properties.
It can be extended to have runtime depend only on the number of nonzeros for sparse inputs~\citep{GhashamiKDD16EFD,HuangPMLR18}.  
Moreover, it applies to distributed settings where data is captured from multiple locations or streams.
Then these sketches can be ``merged'' together~\citep{GhashamiSICOMP16FD, ACH+12}
without accumulating any more error than the single stream setting, and extend to other models~\citep{ATTP}.
These properties apply directly to our new sketches.  

\subsection{FD and Ridge Regression}

Despite FD being recognized as the matrix sketch with best space/error
trade-off (often optimal~\citep{GhashamiSICOMP16FD}),
it has almost no provable connections improvements to high-dimensional regression tasks. 
The only previous approach we know of to connect FD to linear regression
(\citep{MM18} via \citep{Cohen_et_al:2017:IST}), uses FD only to make the
stream processing efficient, does not describe the actual algorithm, and then
uses ridge leverage scores as an additional step to connect to ridge regression.  
The main challenge with connecting FD to linear regression is that FD
approximates the high norm directions of $\bA^\top \bA$ (i.e., measured with
direction/unit vector $\bx$ as $\|\bA^\top \bA \bx\|)$, but drops the low norm directions.
However, linear regression needs to recover $\bc = \bA^\top \bb$ times the inverse of $\bA^\top \bA$.
So if $\bc$ is aligned with the low norm part of $\bA^\top \bA$, then FD provides a poor approximation.  
We observe however, that ridge regression with regularizer $\gamma \bI$ ensures
that all directions of $\bA^\top \bA + \gamma \bI$ have norm at least $\gamma$,
regardless of $\bA$ or its sketch $\bB$.

\begin{figure}
	\centering
	\includegraphics[width=1\linewidth]{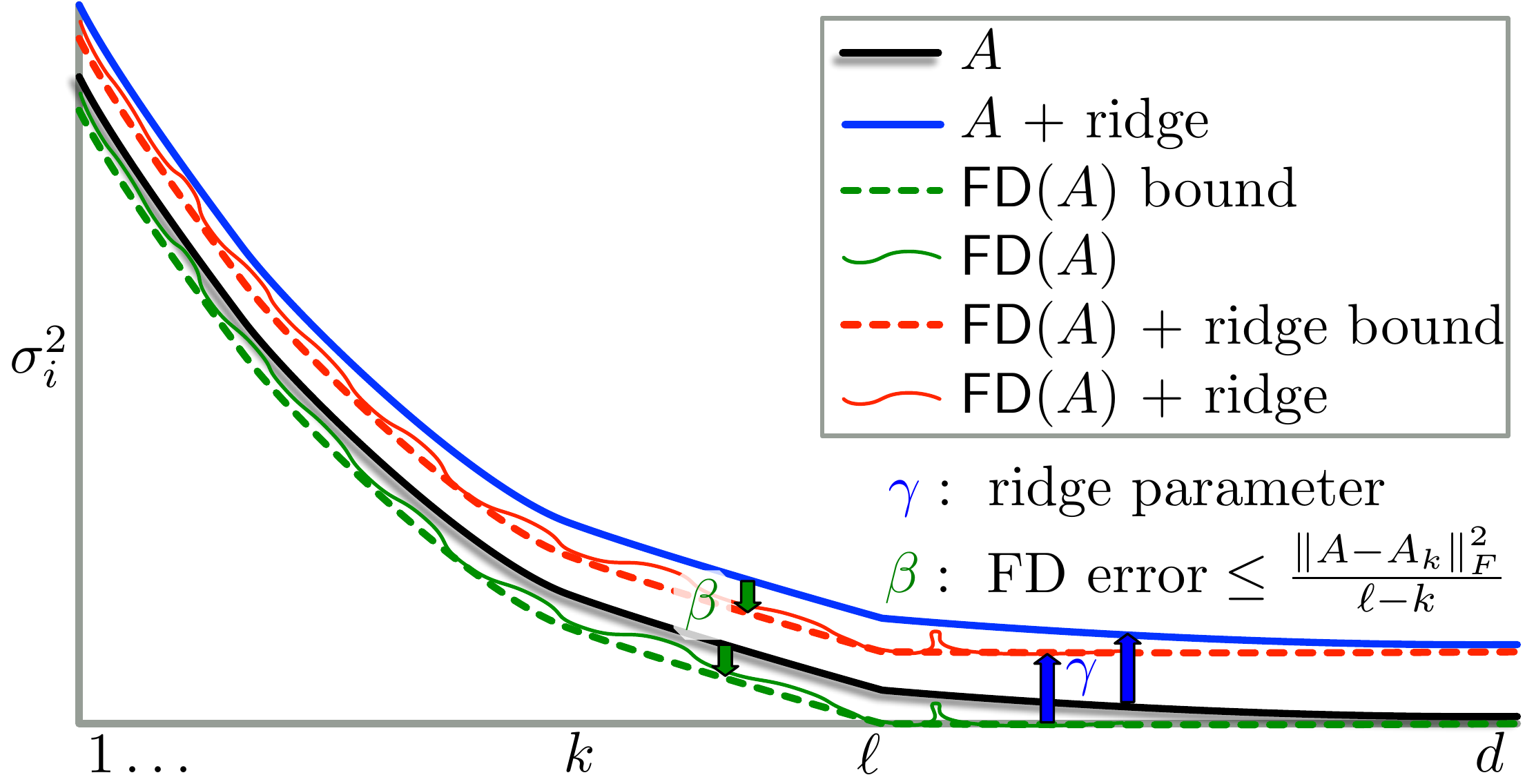}
	\caption[FD+Ridge]{A figurative illustration of possible eigenvalues
	($\sigma_i^2$) of a covariance matrices $\bA^\top \bA$ and variants when
	approximated by FD or adding a ridge term $\gamma\bI$, along sorted eigenvectors.}
	\label{fig:fd-ridge}
\end{figure}

Figure~\ref{fig:fd-ridge} illustrates the effect on the eigenvalue distribution
(as $\sigma_i^2$) for some $\bA^\top \bA$, and how it is affected by a ridge term and FD. 
The ridge term increases the values everywhere, and FD decreases the values everywhere.
In principle, if these effects are balanced just right they should cancel out
-- at least for the high rank part of $\bA^\top \bA$.  
In particular, Robust Frequent Directions attempts to do this implicitly
-- it \emph{automatically picks a good choice of regularizer} $\alpha$ as half
of the amount of the shrinkage induced by FD.  

\section{\uppercase{Algorithms and Analysis}}
We consider rows of $\bA \in \mathbb R^{n \times d}$ and elements of
$\bb \in \mathbb R^{n}$ are given in pairs $(\ba_i, b_i)$ in the stream,
we want to approximate $\bx_\gamma = (\bA^\top \bA + \gamma \bI)^{-1} \bA^\top \bb$
for a given $\gamma>0$ within space $O(\ell d)$, where $\ell < d$.
Let $\mathbf c = \bA^\top \bb$, which can be exactly maintained using space $O(d)$.
But $\bA^\top \bA$ needs space $\Omega(d^2)$, so we use Frequent Directions
(FD) or Robust Frequent Directions (RFD) to approximate $\bA$ by a sketch
(which is an $\ell \times d$ matrix $\mathbf{C}$ and possibly also some auxiliary information).
Then the optimal solution $\bx_\gamma$ and its approximation of $\hat{\bx}_\gamma$ are
\[
\bx_\gamma = (\bA^\top \bA + \gamma \bI)^{-1} \mathbf c \quad \text{ and } \quad \hat{\bx}_\gamma = (\text{sketch} + \gamma \bI)^{-1} \mathbf c.
\]

\begin{algorithm}
	\caption{General FD Ridge Regression (FDRR)}\label{galg}
	\begin{algorithmic}[1]  
		\STATE {\bfseries Input:} $\ell, \bA, \bb, \gamma$
		\STATE Initialize $\textbf{\texttt{xFD}}$, $\mathbf c \gets 0^d$
		\FOR{batches $(\bA_\ell, \bb_\ell) \in \bA, \bb$}
		\STATE $\text{sketch} \gets \textbf{\texttt{xFD}}(\text{sketch}, \bA_\ell)$ \label{galg:xfd}
		\STATE $\mathbf c \gets \mathbf c + \bA_\ell^\top \bb_\ell$ \label{galg:c}
		\ENDFOR
		\STATE $\hat{\bx}_\gamma \gets \mathbf{Solution}(\text{sketch}, \gamma, \mathbf c)$ \label{galg:x}
		\STATE \textbf{return} $\hat{\bx}_\gamma$
	\end{algorithmic}
\end{algorithm}

Algorithm~\ref{galg} shows the general algorithm framework.
It processes a consecutive batch of $\ell$ rows of $\bA$ (denoted $\bA_\ell$)
and $\ell$ elements of $\bb$ (denoted $\bb_\ell$) each step.
\textbf{\texttt{xFD}} refers to a sketching step of some variant of Frequent Directions.
Line 5 computes $\bA^\top \bb$ on the fly, it is not a part of FD.
Line 7 computes the solution coefficients $\hat{\bx}_\gamma$ using only the
sketch of $\bA$ and $\bc$ at the end.
This supplements FD with information to compute the ridge regression solution.

\paragraph{Coefficients error bound.}
The main part of our analysis is the upper bound of the \emph{coefficients error}:
$\eps = \|\hat{\bx}_\gamma - \bx_\gamma\| / \|\bx_\gamma\|$.  
Lemma~\ref{thm:general} shows the key structural result,
translating the sketch covariance error to the upper bound of ridge regression coefficients error.

\begin{lem}\label{thm:general}
Let $\mathbf C^\top \mathbf C$ be an approximation of $\bA^\top \bA \in \mathbb R^{d \times d}$.  
	For any $\mathbf c \in \mathbb R^d$, $\gamma \ge 0$, consider an optimal solution $\bx_\gamma = (\bA^\top \bA + \gamma \bI)^{-1} \mathbf c$, and an approximate  solution $\hat{\bx}_\gamma = (\mathbf C^\top \mathbf C + \gamma \bI)^{-1} \mathbf c$.  Then
	\[
	\|\hat{\bx}_\gamma - \bx_\gamma\| \le \frac{\|\bA^\top \bA - \mathbf C^\top \mathbf C\|_2}{\lambda_{min} (\mathbf C^\top \mathbf C) + \gamma} \|\bx_\gamma\|.  
	\]
\end{lem}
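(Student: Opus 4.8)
The plan is to reduce the whole statement to a single exact matrix-inverse perturbation identity and then bound each factor by its spectral norm. First I would introduce the shorthand $\bM = \bA^\top \bA + \gamma \bI$ and $\hat{\bM} = \mathbf C^\top \mathbf C + \gamma \bI$, so that $\bx_\gamma = \bM^{-1} \bc$ and $\hat{\bx}_\gamma = \hat{\bM}^{-1} \bc$. Both are symmetric, and both are positive definite as soon as $\lambda_{min}(\mathbf C^\top \mathbf C) + \gamma > 0$, so their inverses exist and are well-behaved. The difference of the two solutions is then simply $\hat{\bx}_\gamma - \bx_\gamma = (\hat{\bM}^{-1} - \bM^{-1}) \bc$, and everything else reduces to understanding this difference of inverses.

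The crux of the argument is the resolvent identity $\hat{\bM}^{-1} - \bM^{-1} = \hat{\bM}^{-1}(\bM - \hat{\bM})\bM^{-1}$, which I would verify directly by expanding $\hat{\bM}^{-1}(\bM - \hat{\bM})\bM^{-1} = \hat{\bM}^{-1}\bM\bM^{-1} - \hat{\bM}^{-1}\hat{\bM}\bM^{-1} = \hat{\bM}^{-1} - \bM^{-1}$. The key simplification is that the regularization cancels in the middle factor, $\bM - \hat{\bM} = \bA^\top \bA - \mathbf C^\top \mathbf C$, so the perturbation is \emph{exactly} the covariance error that FD and RFD control. Substituting and recognizing $\bM^{-1}\bc = \bx_\gamma$ gives the clean expression $\hat{\bx}_\gamma - \bx_\gamma = \hat{\bM}^{-1}(\bA^\top \bA - \mathbf C^\top \mathbf C)\,\bx_\gamma$.

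From here I would apply submultiplicativity of the spectral norm to obtain $\|\hat{\bx}_\gamma - \bx_\gamma\| \le \|\hat{\bM}^{-1}\|_2 \, \|\bA^\top \bA - \mathbf C^\top \mathbf C\|_2 \, \|\bx_\gamma\|$. The middle factor is precisely the quantity bounded by the sketch guarantees in (\ref{eq:fd-err}) and (\ref{eq:rfd-err}), so no further work is needed there. For the first factor, since $\hat{\bM} = \mathbf C^\top \mathbf C + \gamma \bI$ is symmetric positive definite, the spectral norm of its inverse equals the reciprocal of its smallest eigenvalue, and adding $\gamma \bI$ shifts every eigenvalue by $\gamma$, so $\lambda_{min}(\hat{\bM}) = \lambda_{min}(\mathbf C^\top \mathbf C) + \gamma$. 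Thus $\|\hat{\bM}^{-1}\|_2 = 1/(\lambda_{min}(\mathbf C^\top \mathbf C) + \gamma)$, which yields the claimed bound.

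I do not expect a serious obstacle: the lemma is essentially a first-order perturbation estimate made exact by the resolvent identity. The only points needing minor care are confirming that $\hat{\bM}$ is genuinely invertible (guaranteed once $\gamma > 0$, and more generally whenever $\lambda_{min}(\mathbf C^\top \mathbf C) + \gamma > 0$) and correctly identifying $\|\hat{\bM}^{-1}\|_2$ with the reciprocal of the smallest eigenvalue of the shifted matrix $\hat{\bM}$, rather than mistakenly using $\mathbf C^\top \mathbf C$ alone. The mild subtlety worth flagging is that the bound is stated in terms of the sketch's own smallest eigenvalue $\lambda_{min}(\mathbf C^\top \mathbf C)$, not that of $\bA^\top \bA$; this is exactly what makes the bound computable from the sketch and is the feature later exploited when $\gamma$ dominates the dropped directions.
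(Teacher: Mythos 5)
Your proposal is correct and follows essentially the same route as the paper's own proof: the resolvent identity $\hat{\bM}^{-1} - \bM^{-1} = \hat{\bM}^{-1}(\bM - \hat{\bM})\bM^{-1}$, submultiplicativity of the spectral norm, and the identification $\|\hat{\bM}^{-1}\|_2 = 1/(\lambda_{min}(\mathbf C^\top \mathbf C) + \gamma)$. If anything, you are slightly more careful than the paper, which leaves the resolvent identity as ``validated backwards by simple algebra'' and does not discuss invertibility explicitly.
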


\begin{proof}
	To simplify the equations, let $\mathbf M = \bA^\top \bA + \lambda \bI$, $\hat{\mathbf M} = \mathbf C^\top \mathbf C + \lambda \bI$, then $\mathbf M - \hat{\mathbf M} = \bA^\top \bA - \mathbf C^\top \mathbf C$, and so $\bx_\gamma = \mathbf M^{-1} \mathbf c, \hat{\bx}_\gamma = \hat{\mathbf M}^{-1} \mathbf c$. 
	\begin{align*}
	\|\hat{\bx}_\gamma - \bx_\gamma\| 
	&= \left\|\hat{\mathbf M}^{-1} \mathbf c - \mathbf M^{-1} \mathbf c\right\| 
	= \left\|\left(\hat{\mathbf M}^{-1} - \mathbf M^{-1} \right) \mathbf c\right\|	\\
	&= \left\|\hat{\mathbf M}^{-1} \left(\mathbf M - \hat{\mathbf M} \right) \mathbf M^{-1} \mathbf c\right\|\\
	&\le \left\|\hat{\mathbf M}^{-1}\right\|_2 \left\|\mathbf M - \hat{\mathbf M}\right\|_2 \left\|\mathbf M^{-1} \mathbf c\right\| \\
	&= \frac{\|\bA^\top \bA - \mathbf C^\top \mathbf C\|_2}{\lambda_{\text{min}} (\mathbf C^\top \mathbf C) + \gamma} \|\bx_\gamma\|
	\end{align*}
	The third equality can be validated backwards by simple algebra. Here $\lambda_{\text{min}} (\cdot)$ refer to the minimal eigenvalue of a matrix.
\end{proof}

Lemma \ref{thm:general} is tight when $\bA^\top \bA - \bC^\top \bC = \alpha \bI$, and $\bC^\top \bC = \beta \bI$ for any $\alpha,\beta\in \R$; see Lemma~\ref{thm:general_tight}.  

\begin{lem}\label{thm:general_tight}
	With the same settings as those in Lemma \ref{thm:general},
	if $\bA^\top \bA - \bC^\top \bC = \alpha \bI$,
	and $\bC^\top \bC = \beta \bI$ for any $\alpha,\beta\in \R$, then
	\[
	\|\hat{\bx}_\gamma - \bx_\gamma\| = \frac{\|\bA^\top \bA - \mathbf C^\top \mathbf C\|_2}{\lambda_{min} (\mathbf C^\top \mathbf C) + \gamma} \|\bx_\gamma\|.  
	\]
\end{lem}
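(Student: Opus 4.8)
The plan is to show that the single inequality appearing in the proof of Lemma~\ref{thm:general} is met with equality under these hypotheses, so no new machinery is needed. First I would substitute the two assumptions to see that everything collapses to scalars. Since $\bC^\top \bC = \beta \bI$ we get $\hat{\mathbf M} = \bC^\top \bC + \gamma \bI = (\beta + \gamma) \bI$ and $\lambda_{\min}(\bC^\top \bC) = \beta$; since $\bA^\top \bA - \bC^\top \bC = \alpha \bI$ we get $\bA^\top \bA = (\alpha + \beta) \bI$, hence $\mathbf M = (\alpha + \beta + \gamma) \bI$, $\mathbf M - \hat{\mathbf M} = \alpha \bI$, and $\|\mathbf M - \hat{\mathbf M}\|_2 = |\alpha|$. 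Thus $\mathbf M$, $\hat{\mathbf M}$, and their difference are all scalar multiples of $\bI$, so they commute and their inverses are again scalar multiples of $\bI$ (assuming, as implicit in Lemma~\ref{thm:general}, that $\beta + \gamma$ and $\alpha + \beta + \gamma$ are nonzero, so the inverses exist).

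Next I would revisit the one lossy step in the derivation, the operator-norm submultiplicativity bound
\[
\left\|\hat{\mathbf M}^{-1}\left(\mathbf M - \hat{\mathbf M}\right)\mathbf M^{-1}\mathbf c\right\| \le \left\|\hat{\mathbf M}^{-1}\right\|_2 \left\|\mathbf M - \hat{\mathbf M}\right\|_2 \left\|\mathbf M^{-1}\mathbf c\right\|,
\]
and argue that it is attained with equality here. Because $\hat{\mathbf M}^{-1}\left(\mathbf M - \hat{\mathbf M}\right) = \tfrac{\alpha}{\beta+\gamma}\bI$ is a scalar multiple of the identity, it scales \emph{every} vector's norm by exactly $\tfrac{|\alpha|}{\beta+\gamma} = \|\hat{\mathbf M}^{-1}\|_2 \|\mathbf M - \hat{\mathbf M}\|_2$; applying it to the particular vector $\mathbf M^{-1}\mathbf c$ therefore yields equality rather than a strict inequality. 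Chaining this through the same algebra as in Lemma~\ref{thm:general} gives $\|\hat{\bx}_\gamma - \bx_\gamma\| = \tfrac{|\alpha|}{\beta+\gamma}\|\mathbf M^{-1}\mathbf c\| = \tfrac{|\alpha|}{\beta+\gamma}\|\bx_\gamma\|$, which is exactly the claimed right-hand side.

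As a consistency check I would also confirm the identity by a direct scalar computation: $\bx_\gamma = \tfrac{1}{\alpha+\beta+\gamma}\mathbf c$ and $\hat{\bx}_\gamma = \tfrac{1}{\beta+\gamma}\mathbf c$, so their difference has norm $\tfrac{|\alpha|}{(\beta+\gamma)|\alpha+\beta+\gamma|}\|\mathbf c\|$, which matches the right-hand side once $\|\bx_\gamma\| = \tfrac{1}{|\alpha+\beta+\gamma|}\|\mathbf c\|$ is used. I do not expect any real obstacle; the only point needing care is sign bookkeeping, namely ensuring the operator norm $\|\mathbf M - \hat{\mathbf M}\|_2$ contributes $|\alpha|$ (not $\alpha$) when $\alpha$ may be negative, and that the nonvanishing of the two denominators is inherited from the invertibility assumptions underlying Lemma~\ref{thm:general}.
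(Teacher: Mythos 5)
Your proposal is correct and follows essentially the same route as the paper: both start from the exact identity $\|\hat{\bx}_\gamma - \bx_\gamma\| = \|\hat{\mathbf M}^{-1}(\mathbf M - \hat{\mathbf M})\mathbf M^{-1}\mathbf c\|$ established in the proof of Lemma~\ref{thm:general}, substitute the hypotheses so that all matrices become scalar multiples of $\bI$, and conclude that both sides equal $\tfrac{|\alpha|}{\beta+\gamma}\|\bx_\gamma\|$. If anything, your sign bookkeeping (writing $|\alpha|$ rather than $\alpha$) is slightly more careful than the paper's own computation.
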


\begin{proof}
    In the proof of Lemma \ref{thm:general}, we have shown that
	$\|\hat{\bx}_\gamma - \bx_\gamma\| = \|\hat{\mathbf M}^{-1} \left(\mathbf M - \hat{\mathbf M} \right) \mathbf M^{-1} \mathbf c\|$, 
    Using the definitions $\mathbf M = \bA^\top \bA + \lambda \bI$, $\hat{\mathbf M} = \mathbf C^\top \mathbf C + \lambda \bI$,
	and $\bx_\gamma = (\bA^\top \bA + \gamma \bI)^{-1} \mathbf c$, 
    \begin{align*}
    \|\hat{\bx}_\gamma - \bx_\gamma\| &= \|(\mathbf C^\top \mathbf C + \gamma \bI)^{-1} (\bA^\top \bA - \mathbf C^\top \mathbf C ) \bx_\gamma\|
    \\ &= 
    \|(\beta \bI + \gamma \bI)^{-1} (\alpha \bI) \bx_\gamma\| = \frac{\alpha}{\beta+\gamma} \|\bx_\gamma\|. 
    \end{align*}
    Similarly for the right hand side
    \[
    \frac{\|\bA^\top \bA - \mathbf C^\top \mathbf C\|_2}{\lambda_{min} (\mathbf C^\top \mathbf C) + \gamma} \|\bx_\gamma\| = \frac{\alpha}{\beta+\gamma} \|\bx_\gamma\|. \qedhere
    \]
\end{proof}

\paragraph{Risk bound.} \label{sec:risk_bound}
We consider the fixed design setting commonly used in recent papers~\citep{DFKU13,LDFU13,CLL+15,MM18,WGM18}:
we assume the data generation model is
$
\bb = \bA\bx + s \iZ,
$
where $\bA, \bx$ and $s$ are fixed, $\iZ \sim \cN(\mathbf 0,\bI)$ is the random error.
The \textit{risk} $\cR (\hat \bx)$ of estimator $\hat \bx$ of unknown
coefficient $\bx$ is the expected sum of squared error loss over the randomness of noise,
\[
\cR (\hat \bx) = \mathbb E_\iZ \left[\|\bA \hat\bx - \bA \bx\|^2\right] = \mathbb E_\iZ \left[\|\bA (\hat\bx - \bx)\|^2\right].  
\]
We can further decompose the risk into \textit{squared bias} and \textit{variance},
\begin{align*}
	\cR (\hat \bx) &=  \cB^2 (\hat \bx) + \cV (\hat \bx),\\
	\cB^2 (\hat \bx) &=  \|\bA \left(\mathbb E_\iZ \left[ \hat \bx \right] - \bx\right)\|^2,\\
	\cV (\hat \bx) &=  \mathbb E_\iZ \left[\|\bA \left(\hat\bx - \mathbb E_\iZ \left[ \hat \bx \right]\right)\|^2\right].
\end{align*}

\begin{lem}\label{thm:general_risk}
Given $\bA \in \mathbb R^{n \times d}$, $\bx \in \mathbb R^d$, $s > 0$,
let $\iZ \sim \cN(\mathbf 0,\bI)$ represent the standard Gaussian random variable,
and $\bb = \bA\bx + s \iZ$, let $\mathbf C^\top \mathbf C$ be an deterministic approximation of $\bA^\top \bA$.
Then the risk of optimal ridge regression solution
$\bx_\gamma = (\bA^\top \bA+\gamma \bI)^{-1} \bA^\top \bb$ is the sum of
\begin{align*}
	\cB^2 (\bx_\gamma) &= \gamma^2 \left\Vert \bA (\bA^\top \bA + \gamma \bI)^{-1} \bx \right\Vert^2, \\
	\cV (\bx_\gamma) &= s^2\|\bA (\bA^\top \bA+\gamma \bI)^{-1} \bA^\top\|_F^2.
\end{align*}
The risk of the approximate solution $\hat \bx_\gamma = (\bC^\top \bC + \gamma \bI)^{-1} \bA^\top \bb$
is the sum of
\begin{align*}
    \cB^2 (\hat \bx_\gamma) 
    & =
    \left\Vert\bA \left((\bC^\top \bC + \gamma \bI)^{-1} \bA^\top \bA  - \bI \right)\bx\right\Vert^2 
    \\
    \cV (\hat \bx_\gamma) 
    & = 
    s^2\|\bA (\bC^\top \bC + \gamma \bI)^{-1} \bA^\top\|_F^2
\end{align*}
which are bounded as
\begin{align*}
    \cB^2 (\hat \bx_\gamma) 
    & \leq
    \left(1 + \frac 1 {\gamma^4} \|\bA\|_2^4 \|\bA^\top \bA - \bC^\top \bC\|^2\right) \cB^2(\bx_\gamma)
    \\
    \cV (\hat \bx_\gamma) 
    & \leq
    (1+\|\bA\|_2^2/\gamma)^2 \cV(\bx_\gamma).
\end{align*}
\end{lem}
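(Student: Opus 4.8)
The plan is to first derive the four exact identities and then prove the two inequalities by separate arguments. I would substitute the model $\bb = \bA\bx + s\iZ$ into each estimator, so that $\bx_\gamma = (\bA^\top\bA+\gamma\bI)^{-1}\bA^\top\bA\,\bx + s(\bA^\top\bA+\gamma\bI)^{-1}\bA^\top\iZ$, and likewise for $\hat\bx_\gamma$ with $\bC^\top\bC$ inside the inverse but the same exact $\bA^\top$ multiplying $\bb$. Taking $\mathbb{E}_\iZ[\cdot]$ kills the term linear in $\iZ$, isolating the bias, and the variance is the expectation of a Gaussian quadratic form. The bias formulas then follow from the resolvent simplification $(\bA^\top\bA+\gamma\bI)^{-1}\bA^\top\bA - \bI = -\gamma(\bA^\top\bA+\gamma\bI)^{-1}$ (and its $\bC$-analogue), and the variance formulas from $\mathbb{E}_\iZ\|\bM\iZ\|^2 = \mathrm{tr}(\bM^\top\bM) = \|\bM\|_F^2$. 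This step is routine bookkeeping.

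For the variance inequality I would use a symmetric factorization. Set $\mathbf{Z} = \bA(\bA^\top\bA+\gamma\bI)^{-1/2}$, so that the optimal variance matrix is $\bM = \mathbf{Z}\mathbf{Z}^\top$. Writing $(\bC^\top\bC+\gamma\bI)^{-1} = (\bA^\top\bA+\gamma\bI)^{-1/2}\mathbf{W}(\bA^\top\bA+\gamma\bI)^{-1/2}$ with the symmetric PSD factor $\mathbf{W} = (\bA^\top\bA+\gamma\bI)^{1/2}(\bC^\top\bC+\gamma\bI)^{-1}(\bA^\top\bA+\gamma\bI)^{1/2}$ yields $\hat\bM = \mathbf{Z}\mathbf{W}\mathbf{Z}^\top$. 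Since $\bC^\top\bC\succeq 0$ forces $\|(\bC^\top\bC+\gamma\bI)^{-1}\|_2\le 1/\gamma$, I get $\|\mathbf{W}\|_2 \le \|\bA^\top\bA+\gamma\bI\|_2/\gamma = 1 + \|\bA\|_2^2/\gamma$. Then $\mathbf{W}\preceq \|\mathbf{W}\|_2\bI$ gives the L\"owner bound $\hat\bM \preceq \|\mathbf{W}\|_2\,\bM$, and because the Frobenius norm is monotone on the PSD cone (if $\mathbf 0\preceq \bX\preceq \bM$ then $\mathrm{tr}(\bM^2)-\mathrm{tr}(\bX^2) = \mathrm{tr}((\bM-\bX)(\bM+\bX))\ge 0$), this passes to $\|\hat\bM\|_F \le \|\mathbf{W}\|_2\|\bM\|_F$. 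Squaring produces the factor $(1+\|\bA\|_2^2/\gamma)^2$; note this argument uses only $\bC^\top\bC\succeq 0$.

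For the bias inequality the key algebraic step is a resolvent identity that places a multiplicative correction on the correct side. With $\bS = \bA^\top\bA - \bC^\top\bC$ and $(\bC^\top\bC+\gamma\bI)^{-1} - (\bA^\top\bA+\gamma\bI)^{-1} = (\bA^\top\bA+\gamma\bI)^{-1}\bS(\bC^\top\bC+\gamma\bI)^{-1}$, the approximate bias operator factors as $\bA\left[(\bC^\top\bC+\gamma\bI)^{-1}\bA^\top\bA - \bI\right] = -\gamma\,\bA(\bA^\top\bA+\gamma\bI)^{-1}(\bI + \bX)$, where $\bX = -\frac1\gamma\bS(\bC^\top\bC+\gamma\bI)^{-1}\bA^\top\bA$ and $-\gamma\,\bA(\bA^\top\bA+\gamma\bI)^{-1}$ is exactly the optimal bias operator, so that $\cB^2(\bx_\gamma) = \gamma^2\|\bA(\bA^\top\bA+\gamma\bI)^{-1}\bx\|^2$ and $\cB^2(\hat\bx_\gamma) = \gamma^2\|\bA(\bA^\top\bA+\gamma\bI)^{-1}(\bI+\bX)\bx\|^2$. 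The operator bound $\|\bX\|_2 \le \frac1\gamma\|\bS\|_2\,\|(\bC^\top\bC+\gamma\bI)^{-1}\|_2\,\|\bA^\top\bA\|_2 \le \|\bA\|_2^2\|\bS\|_2/\gamma^2$ recovers the target correction, since $\tfrac1{\gamma^4}\|\bA\|_2^4\|\bS\|^2 = \|\bX\|_2^2$.

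The hard part will be the final passage from $\cB^2(\hat\bx_\gamma)$ to $(1+\|\bX\|_2^2)\,\cB^2(\bx_\gamma)$. Because the shared outer factor $\bA(\bA^\top\bA+\gamma\bI)^{-1}$ is strongly anisotropic, a triangle-inequality bound only gives the weaker $(1+\|\bX\|_2)^2$; the stated factor forces the cross term $\langle\, \bA(\bA^\top\bA+\gamma\bI)^{-1}\bx,\ \bA(\bA^\top\bA+\gamma\bI)^{-1}\bX\bx\,\rangle$ to be non-positive. I expect this is exactly where the Frequent-Directions structure enters: with $\bS\succeq 0$ and $\bC^\top\bC$ sharing the eigenbasis of $\bA^\top\bA$, the relevant matrices commute, the correction diagonalizes, and the bound reduces to per-coordinate scalar inequalities that hold comfortably. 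I would therefore carry out the cross-term computation in the eigenbasis of $\bA^\top\bA$ and verify its sign there. Identifying precisely which property of the sketch controls this cross term is the main obstacle, since a small non-commuting perturbation with $\bS\succeq 0$ already makes the excess $\hat\bM$-analogue for the bias indefinite at first order; so mere domination $\mathbf 0\preceq\bC^\top\bC\preceq\bA^\top\bA$ appears insufficient, and the commuting eigenstructure seems essential to collecting the terms into the clean factor $1+\|\bX\|_2^2$.
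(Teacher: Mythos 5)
Your exact bias/variance formulas and your variance bound are correct, and the variance argument is a genuinely different route from the paper's. The paper discards $\bC^\top\bC$ via pseudoinverse manipulations to get $\cV(\hat\bx_\gamma)\le s^2\|\bA^\top\bA\|_F^2/\gamma^2$ (a step the paper itself notes requires $\bA$ to have full column rank) and then separately lower-bounds $\cV(\bx_\gamma)\ge s^2\|\bA^\top\bA\|_F^2/(\|\bA\|_2^2+\gamma)^2$. Your congruence argument $\hat{\bM}=\mathbf{Z}\mathbf{W}\mathbf{Z}^\top\preceq\|\mathbf{W}\|_2\,\mathbf{Z}\mathbf{Z}^\top$ plus Frobenius monotonicity on the PSD cone reaches the same factor directly, with no rank assumption on $\bA$; that part of your proposal is cleaner and slightly more general than the paper's.

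The bias bound, however, has a genuine gap, and the missing idea is not Frequent-Directions structure; it is a regrouping. Writing $\bK=\bA^\top\bA$ and $\hat{\bK}=\bC^\top\bC$, your factorization puts the correction \emph{inside}: $\cB^2(\hat\bx_\gamma)=\gamma^2\|\bA(\bK+\gamma\bI)^{-1}(\bI+\bX)\bx\|^2$. From there no multiplicative bound in $\cB^2(\bx_\gamma)$ follows --- not even the $(1+\|\bX\|_2)^2$ you concede --- because $\|\bA(\bK+\gamma\bI)^{-1}\bX\bx\|$ is not controlled by $\|\bX\|_2\,\|\bA(\bK+\gamma\bI)^{-1}\bx\|$: the correction cannot be commuted past the anisotropic outer map, which is exactly the obstruction you ran into. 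The paper avoids this by applying the resolvent identity with the error factor on the \emph{other} side, $(\hat{\bK}+\gamma\bI)^{-1}-(\bK+\gamma\bI)^{-1}=(\hat{\bK}+\gamma\bI)^{-1}(\bK-\hat{\bK})(\bK+\gamma\bI)^{-1}$, then using that $\bK$ commutes with $(\bK+\gamma\bI)^{-1}$ and splitting $\bK=\bA^\top\bA$, so the whole bias operator regroups as an $n\times n$ error operator acting on the exact optimal-bias vector:
\[
\bA\left((\hat{\bK}+\gamma\bI)^{-1}\bK-\bI\right)\bx
=\left(\frac{1}{\gamma}\,\bA(\hat{\bK}+\gamma\bI)^{-1}(\bK-\hat{\bK})\bA^\top-\bI\right)\gamma\,\bA(\bK+\gamma\bI)^{-1}\bx .
\]
Everything to the right of the large parenthesis is the vector whose squared norm is $\cB^2(\bx_\gamma)$, so submultiplicativity alone yields a multiplicative bound, using only $\hat{\bK}\succeq 0$. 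Your conclusion that ``mere domination appears insufficient, and the commuting eigenstructure seems essential'' is therefore an overreach: the lemma assumes nothing about the eigenbasis of $\bC^\top\bC$, and the paper's argument uses nothing of the kind. One point in your favor: the cross-term worry is real, but it bites elsewhere. After the regrouping, the paper asserts $\|\bY-\bI\|_2^2\le 1+\|\bY\|_2^2$ for $\bY=\frac{1}{\gamma}\bA(\hat{\bK}+\gamma\bI)^{-1}(\bK-\hat{\bK})\bA^\top$, which tacitly needs the quadratic form of $\bY$ to be nonnegative; without that, the triangle inequality gives the factor $\bigl(1+\frac{1}{\gamma^2}\|\bA\|_2^2\|\bK-\hat{\bK}\|_2\bigr)^2$ rather than $1+\frac{1}{\gamma^4}\|\bA\|_2^4\|\bK-\hat{\bK}\|_2^2$. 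That subtlety only changes the form of the constant and lives entirely inside the outer factorization --- it does not rescue your inner one.
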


\begin{proof}
Within this proof, we sometimes use $\bK = \bA^\top \bA$ and $\hat \bK = \bC^\top \bC$ to shorten long equations.

Plugging $\bb = \bA\bx + s \iZ$ into $\bx_\gamma = (\bA^\top \bA+\gamma \bI)^{-1} \bA^\top \bb$ gives us
\begin{align*}
\bx_\gamma = (\bA^\top \bA+\gamma \bI)^{-1} \bA^\top \bA \bx + (\bA^\top \bA+\gamma \bI)^{-1} \bA^\top s \iZ.
\end{align*}

Since the standard Gaussian $\iZ$ is the only random variable and we know that
$\mathbb E_\iZ \left[\bX \iZ\right] = 0$ for any $\bX \in \mathbb R^{d \times n}$,
thus
\begin{align*}
\mathbb E_\iZ \left[\bx_\gamma\right] &= (\bA^\top \bA+\gamma \bI)^{-1} \bA^\top \bA \bx.
\end{align*}
Similarly, we have
\begin{align*}
\hat \bx_\gamma = (\bC^\top \bC + \gamma \bI)^{-1} \bA^\top \bA \bx + (\bC^\top \bC + \gamma \bI)^{-1} \bA^\top s \iZ,
\end{align*}
and
\[
\mathbb E_\iZ \left[\hat \bx_\gamma\right] = (\bC^\top \bC + \gamma \bI)^{-1} \bA^\top \bA \bx.
\]

By definition, the squared bias of $\bx_\gamma$ is
\begin{align*}
\cB^2 (\bx_\gamma) &= \left\Vert\bA \left(\mathbb E_\iZ \left[ \bx_\gamma \right] - \bx\right)\right\Vert^2\\
    &= \left\Vert\bA \left((\bA^\top \bA+\gamma \bI)^{-1} \bA^\top \bA \bx - \bx\right)\right\Vert^2\\
    &= \left\Vert\bA \left((\bK +\gamma \bI)^{-1} \bK  - \bI \right)\bx\right\Vert^2\\
    &= \resizebox{.8\hsize}{!}{$\left\Vert \bA \left( (\bK + \gamma \bI)^{-1} \bK - (\bK+\gamma \bI)^{-1} (\bK + \gamma \bI) \right) \bx \right\Vert^2$}\\
    &= \left\Vert \bA \left(\left(\bK+\gamma \bI\right)^{-1} \left(\bK  - \left(\bK+\gamma \bI\right)\right) \right) \bx \right\Vert^2\\
    &= \left\Vert\bA \left((\bA^\top \bA+\gamma \bI)^{-1} ( - \gamma \bI) \right) \bx \right\Vert^2\\
    &= \gamma^2 \left\Vert \bA (\bA^\top \bA + \gamma \bI)^{-1} \bx \right\Vert^2.
\end{align*}

And the squared bias of $\hat \bx_\gamma$ is
\begin{align*}
    \cB^2 (\hat \bx_\gamma) =& \left\Vert\bA \left(\mathbb E_\iZ \left[ \hat \bx_\gamma \right] - \bx\right)\right\Vert^2\\
    =& \left\Vert\bA \left((\bC^\top \bC + \gamma \bI)^{-1} \bA^\top \bA  - \bI \right)\bx\right\Vert^2.
\end{align*}
By playing with linear algebra, we can show that it is
\begin{align*}
=& \left\Vert\bA \left((\hat\bK + \gamma \bI)^{-1} \bK  - \bI \right)\bx\right\Vert^2\\
=& \resizebox{.97\hsize}{!}{$\left\Vert\bA \left(\left((\hat \bK + \gamma \bI)^{-1} - (\bK + \gamma \bI)^{-1} + (\bK + \gamma \bI)^{-1} \right) \bK  - \bI \right)\bx\right\Vert^2$}\\
=& \resizebox{.97\hsize}{!}{$\left\Vert\bA \left(\left((\hat \bK + \gamma \bI)^{-1} (\bK - \hat \bK) (\bK + \gamma \bI)^{-1} + (\bK + \gamma \bI)^{-1} \right) \bK  - \bI \right)\bx\right\Vert^2$}\\
=& \resizebox{.97\hsize}{!}{$\left\Vert\bA \left((\hat \bK + \gamma \bI)^{-1} (\bK - \hat \bK) (\bK + \gamma \bI)^{-1} \bK + (\bK + \gamma \bI)^{-1} \bK  - \bI \right)\bx\right\Vert^2$}\\
=& \resizebox{.97\hsize}{!}{$\left\Vert\bA \left((\hat \bK + \gamma \bI)^{-1} (\bK - \hat \bK) \bK (\bK + \gamma \bI)^{-1} - \gamma (\bK + \gamma \bI)^{-1} \right)\bx\right\Vert^2$}\\
=& \resizebox{.97\hsize}{!}{$\left\Vert \left(\bA(\hat \bK + \gamma \bI)^{-1} (\bK - \hat \bK) \bA^\top \bA (\bK + \gamma \bI)^{-1} - \gamma \bA (\bK + \gamma \bI)^{-1} \right)\bx\right\Vert^2$}\\
=& \resizebox{.97\hsize}{!}{$\left\Vert \left(\frac 1 \gamma \bA(\hat \bK + \gamma \bI)^{-1} (\bK - \hat \bK) \bA^\top - \bI \right) \gamma \bA (\bK + \gamma \bI)^{-1} \bx\right\Vert^2$}\\
\le& \resizebox{.97\hsize}{!}{$\left\Vert \frac 1 {\gamma} \bA(\hat \bK + \gamma \bI)^{-1} (\bK - \hat \bK) \bA^\top - \bI \right\Vert^2 \gamma^2 \left\Vert\bA (\bK + \gamma \bI)^{-1} \bx\right\Vert^2$}\\
\le& \resizebox{.97\hsize}{!}{$\left(\frac 1 {\gamma^2}  \|\bA\|_2^2 \frac 1 {\gamma^2}  \|\bK - \hat \bK\|^2 \|\bA\|_2^2 + 1\right) \gamma^2 \left\Vert\bA (\bK + \gamma \bI)^{-1} \bx\right\Vert^2$}\\
=& \left(\frac 1 {\gamma^4} \|\bA\|_2^4 \|\bA^\top \bA - \bC^\top \bC\|^2 + 1\right) \cB^2(\bx_\gamma).
\end{align*}
The third equality follows $\hat\bM^{-1} - \bM^{-1} = \hat\bM^{-1}(\bM - \hat\bM)\bM^{-1}$
for any invertable matrices $\bM,\hat\bM$ with the same dimensions,
which has been used in the proof of Lemma \ref{thm:general}. 
The fifth equality follows $(\bK + \gamma \bI)^{-1} \bK  - \bI = - \gamma (\bK + \gamma \bI)^{-1}$,
which has been shown in the derivation of $\cB^2 (\bx_\gamma)$ above.
The last inequality follows $\|(\hat \bK + \gamma \bI)^{-1}\|_2^2 \le \frac{1}{\gamma^2}$
because $\hat \bK$ is positive semi-definite.

For the variance part, by definition, the variance of $\bx_\gamma$ is
\begin{align*}
    \cV (\bx_\gamma) =& \mathbb E_\iZ \left[\|\bA \left(\bx_\gamma - \mathbb E_\iZ \left[ \bx_\gamma \right]\right)\|^2\right]\\
    =& \mathbb E_\iZ \left[\|\bA \left((\bA^\top \bA+\gamma \bI)^{-1} \bA^\top s \iZ\right)\|^2\right]\\
    =& s^2\|\bA (\bA^\top \bA+\gamma \bI)^{-1} \bA^\top\|_F^2.
\end{align*}

And the variance of $\hat \bx_\gamma$ is
\begin{align*}
    \cV (\hat \bx_\gamma) =& \mathbb E_\iZ \left[\|\bA \left(\hat \bx_\gamma - \mathbb E_\iZ \left[ \hat \bx_\gamma \right]\right)\|^2 \right]\\
    =& \mathbb E_\iZ \left[\|\bA \left((\bC^\top \bC + \gamma \bI)^{-1} \bA^\top s \iZ\right)\|^2 \right]\\
    =& s^2\|\bA (\bC^\top \bC + \gamma \bI)^{-1} \bA^\top\|_F^2\\
    =& s^2\|(\bA^\dagger)^\dagger (\bC^\top \bC + \gamma \bI)^{\dagger} ((\bA^\top)^\dagger)^\dagger\|_F^2\\
    =& s^2\|((\bA^{\top})^{\dagger}\bC^\top \bC \bA^{\dagger} + \gamma (\bA^\top)^\dagger \bA^\dagger )^\dagger\|_F^2\\
    \le& s^2\|(\gamma (\bA \bA^\top)^{\dagger})^{\dagger}\|_F^2 = \frac{1}{\gamma^2} s^2 \|\bA^\top \bA\|_F^2\\
    \le& \left(\frac{\|\bA\|_2^2 + \gamma}{\gamma}\right)^2 \cV(\bx_\gamma)\\
    =& (1+\|\bA\|_2^2/\gamma)^2 \cV(\bx_\gamma).
\end{align*}
The fifth equality need the assumption that $\bA$ has full column rank.
The last inequality holds because
\begin{align*}
& \cV (\bx_\gamma) = s^2 \sum_{i=1}^{d} \left(\frac{\sigma_i^2}{\sigma_i^2 + \gamma}\right)^2 \ge s^2 \sum_{i=1}^{d} \left(\frac{\sigma_i^2}{\sigma_1^2 + \gamma}\right)^2\\
= & \frac{s^2}{(\|\bA\|_2^2 + \gamma)^2} \sum_{i=1}^{d} \sigma_i^4 = \frac{s^2}{(\|\bA\|_2^2 + \gamma)^2} \|\bA^\top \bA\|_F^2.
\end{align*}
Here $\sigma_i$ represent the $i$th singular value of $\bA$.
\end{proof}

Note that the variance bound is independent of $\bC^\top \bC$;
this is because it is positive definite and constructed deterministically.
We also get some other variance bounds, Lemma \ref{thm:var_bound_2}
and \ref{thm:var_bound_3} in the the Supplement Materials,
which are related to the spectral bound, but can be much worse when
$\|\bA^\top \bA - \bC^\top \bC\|_2^2 \ne 0$.

\subsection{Using Frequent Directions}

Now we consider Algorithm~\ref{fd-alg} (\FDRR), using FD as \textbf{\texttt{xFD}} in Algorithm~\ref{galg}.
Specifically, it uses the Fast Frequent Directions algorithm \citep{GhashamiSICOMP16FD}.
We explicitly store the first $\ell$ singular values $\mathbf \Sigma$ and
singular vectors $\mathbf V^\top$, instead of $\mathbf B$, to be able to compute the the solution efficiently.
Note that in the original FD algorithm, $\mathbf B = \mathbf \Sigma_\ell \mathbf V_\ell^\top$.
Line 4 and 5 are what FD actually does in each step.
It appends new rows $\bA_\ell$ to the current sketch
$\mathbf \Sigma_\ell \mathbf V_\ell^\top$, calls \textsc{svd} to calculate the
singular values $\mathbf \Sigma'$ and right singular vectors $\mathbf V'^\top$,
then reduces the rank to $\ell$.

\begin{algorithm}
	\caption{Frequent Directions Ridge Regression (\FDRR)}\label{fd-alg}
	\begin{algorithmic}[1]  
		\STATE {\bfseries Input:} $\ell, \bA, \bb, \gamma$
		\STATE $\mathbf \Sigma \gets 0^{\ell \times \ell}, \mathbf V^\top \gets 0^{\ell \times d}, \mathbf c \gets 0^d$
		\FOR{batches $(\bA_\ell, \bb_\ell) \in \bA, \bb$}
		\STATE $\_, \mathbf \Sigma', \mathbf V'^\top \gets \textsc{svd}([\mathbf V \mathbf \Sigma^\top; \bA_\ell^\top]^\top)$ \label{fd-alg:fd1}
		\STATE $\mathbf \Sigma \gets \sqrt{\mathbf \Sigma'^2_{\ell} - \sigma_{\ell+1}^2 \mathbf I_\ell}, \mathbf V \gets \mathbf V'_{\ell}$ \label{fd-alg:fd2}
		\STATE $\mathbf c \gets \mathbf c + \bA_\ell^\top \bb_\ell$ \label{fd-alg:c}
		\ENDFOR
		\STATE $\mathbf c' \gets \mathbf V^\top \mathbf c$ \label{fd-alg:x1}
		\STATE $\hat{\bx}_\gamma \hspace{-2pt} \gets \hspace{-1pt} \mathbf V (\mathbf \Sigma^2 + \gamma \mathbf I_\ell)^{-1} \mathbf c' + \gamma^{-1} (\mathbf c - \hspace{-1pt} \mathbf V \mathbf c')$ \label{fd-alg:x2}
		\STATE \textbf{return} $\hat{\bx}_\gamma$
	\end{algorithmic}
\end{algorithm}

Line 8 and 9 are how we compute the solution $\hat{\bx}_\gamma = (\mathbf V \mathbf \Sigma^2 \mathbf V^\top + \gamma \bI)^{-1} \mathbf c$. Explicitly inverting that matrix is not only expensive but also would use $O(d \times d)$ space, which exceeds the space limitation $O(\ell \times d)$. The good news is that $\mathbf V$ contains the eigenvectors of $(\mathbf V \mathbf \Sigma^2 \mathbf V^\top + \gamma \bI)^{-1}$, the corresponding $\ell$ eigenvalues $(\sigma_i^2 + \gamma)^{-1}$ for $i \in \{1, ..., \ell\}$, and the remaining eigenvalues are $\gamma^{-1}$. So we can separately compute $\hat{\bx}_\gamma$ in the subspace spanned by $\mathbf V$ and its null space.

\begin{thm}
\label{thm:fd}
Given $\bA \in \mathbb R^{n \times d}$, $\bb \in \mathbb R^n$,
let $\bx_\gamma = (\bA^\top \bA + \gamma \bI)^{-1} \bA^\top \bb$ and
$\hat{\bx}_\gamma$ be the output of Algorithm~\ref{fd-alg}
$\textup{\FDRR}(\ell, \bA, \bb, \gamma)$.
If
\[
\ell \ge \frac{\|\bA - \bA_k\|_F^2}{\gamma\eps} + k, 
\quad \text{ or } \quad
\gamma \ge \frac{\|\bA - \bA_k\|_F^2}{\eps(\ell - k)},
\]
then
\[
\|\hat{\bx}_\gamma - \bx_\gamma\| \le \eps \|\bx_\gamma\|.  
\]
It also holds that 
$\|\langle\hat{\bx}_\gamma, \mathbf a'\rangle - \langle\bx_\gamma, \mathbf a'\rangle\| \le \eps \|\bx_\gamma\|\|\mathbf a'\|$
for any $\mathbf a' \in \mathbb R^d$, and $\|\bA'\hat{\bx}_\gamma - \bA' \bx_\gamma\| \le \eps \|\bx_\gamma\|\|\bA'\|_2$
for any $\bA' \in \mathbb R^{m \times d}$. 
The squared statistical bias $\cB^2 (\hat \bx_\gamma) \le \left(1 + \frac{\eps^2}{\gamma^2}\|\bA\|_2^4 \right) \cB^2(\bx_\gamma)$,
and the statistical variance $\cV (\hat \bx_\gamma) \le (1+\|\bA\|_2^2/\gamma)^2 \cV(\bx_\gamma)$.
The running time is  $O(n \ell d)$ and requires space $O(\ell d)$.
\end{thm}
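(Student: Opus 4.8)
The plan is to assemble the theorem from the two structural lemmas already in hand, Lemma~\ref{thm:general} and Lemma~\ref{thm:general_risk}, together with the FD covariance guarantee~(\ref{eq:fd-err}), all instantiated with the sketch $\mathbf C = \mathbf B = \mathbf \Sigma \mathbf V^\top$ that Algorithm~\ref{fd-alg} maintains implicitly through $\mathbf \Sigma$ and $\mathbf V^\top$. Each analytic claim then becomes a short specialization of one of these results, and the only genuine work is algorithmic.

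First I would establish the coefficients error bound. Setting $\mathbf C = \mathbf B$ in Lemma~\ref{thm:general} and noting that $\mathbf C^\top \mathbf C$ is positive semidefinite, so $\lambda_{\min}(\mathbf C^\top \mathbf C) \ge 0$, gives
\[
\|\hat{\bx}_\gamma - \bx_\gamma\| \le \frac{\|\bA^\top \bA - \mathbf B^\top \mathbf B\|_2}{\gamma}\,\|\bx_\gamma\|.
\]
Substituting~(\ref{eq:fd-err}), namely $\|\bA^\top \bA - \mathbf B^\top \mathbf B\|_2 \le \frac{1}{\ell-k}\|\bA - \bA_k\|_F^2$, the right side becomes $\frac{\|\bA - \bA_k\|_F^2}{\gamma(\ell-k)}\|\bx_\gamma\|$, and requiring this to be at most $\eps\|\bx_\gamma\|$ is equivalent, after rearranging, to either of the two stated sufficient conditions on $\ell$ and $\gamma$. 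The two corollaries are then immediate: the inner-product inequality follows from Cauchy--Schwarz applied to $\langle \hat{\bx}_\gamma - \bx_\gamma, \mathbf a'\rangle$, and the matrix inequality from submultiplicativity of the spectral norm, $\|\bA'(\hat{\bx}_\gamma - \bx_\gamma)\| \le \|\bA'\|_2\,\|\hat{\bx}_\gamma - \bx_\gamma\|$.

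Next, for the risk bounds I would invoke Lemma~\ref{thm:general_risk} with $\mathbf C = \mathbf B$. The variance bound transfers verbatim, since the bound proved there is already independent of the sketch. For the squared bias, Lemma~\ref{thm:general_risk} supplies the factor $1 + \frac{1}{\gamma^4}\|\bA\|_2^4\|\bA^\top \bA - \mathbf B^\top \mathbf B\|^2$; the hypothesis, in either of its two forms, is precisely the statement $\|\bA^\top \bA - \mathbf B^\top \mathbf B\|_2 \le \frac{\|\bA - \bA_k\|_F^2}{\ell-k} \le \gamma\eps$, so the factor collapses to $1 + \frac{\eps^2}{\gamma^2}\|\bA\|_2^4$, as claimed.

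Finally, the resource bounds reduce to checking that Algorithm~\ref{fd-alg} adds no overhead beyond FD itself. The per-batch SVD and rank reduction in lines~\ref{fd-alg:fd1}--\ref{fd-alg:fd2} are exactly Fast FD, costing $O(n\ell d)$ time and $O(\ell d)$ space, while maintaining $\mathbf c$ costs $O(d)$ per row. The one place deserving care, and the step I expect to be the main obstacle rather than any single inequality, is the final solve in lines~\ref{fd-alg:x1}--\ref{fd-alg:x2}: I would verify that because $\mathbf V$ holds the eigenvectors of $(\mathbf V \mathbf \Sigma^2 \mathbf V^\top + \gamma \bI)^{-1}$ with eigenvalues $(\sigma_i^2 + \gamma)^{-1}$ on its range and $\gamma^{-1}$ on its null space, the solution splits into the displayed sum $\mathbf V(\mathbf \Sigma^2 + \gamma \bI)^{-1}\mathbf V^\top \mathbf c + \gamma^{-1}(\mathbf c - \mathbf V \mathbf V^\top \mathbf c)$ and never materializes a dense $d \times d$ inverse, so the total space stays $O(\ell d)$.
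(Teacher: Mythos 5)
Your proposal is correct and follows essentially the same route as the paper's own proof: apply Lemma~\ref{thm:general} with $\lambda_{\min}(\mathbf B^\top \mathbf B) \ge 0$, plug in the FD guarantee~(\ref{eq:fd-err}) to get the coefficient bound and the two sufficient conditions, feed the same covariance bound into Lemma~\ref{thm:general_risk} for the bias and variance factors, and justify the $O(\ell d)$-space solve in lines~\ref{fd-alg:x1}--\ref{fd-alg:x2} via the eigendecomposition of $\mathbf V \mathbf \Sigma^2 \mathbf V^\top + \gamma \bI$ split over the range and null space of $\mathbf V$. The only cosmetic difference is that you spell out the bias-factor substitution and the Cauchy--Schwarz/submultiplicativity steps that the paper leaves implicit.
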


\begin{proof}
	Line 6 computes $\mathbf c = \bA^\top \bb$ in time $O(nd)$ using space $O(\ell d)$. Thus
	$
	\bx_\gamma = (\bA^\top \bA + \gamma \bI)^{-1} \mathbf c.  
	$
	
	Line 8 and 9 compute the solution $\hat{\bx}_\gamma = \mathbf V (\mathbf \Sigma^2 + \gamma \mathbf I_\ell)^{-1} \mathbf V^\top \mathbf c + \gamma^{-1} (\mathbf c - \mathbf V \mathbf V^\top \mathbf c)$ in time $O(d \ell)$ using space $O(d \ell)$. Let $\mathbf N \in \mathbb R^{d\times (d-\ell)}$ be a set of orthonormal basis of the null space of $\mathbf V$. Then
	\begin{align*}
	&\left( \mathbf V \mathbf \Sigma^2 \mathbf V^\top + \gamma \bI \right)^{-1} \\
	& = 
	\left(
	\left[\begin{array}{cc} \mathbf V & \mathbf N\end{array}\right]
	\left[\begin{array}{cc} \mathbf \Sigma^2+\gamma\mathbf I_\ell & 0 \\
	  0 & \gamma\mathbf I_{d-\ell} \end{array}\right]
	\left[\begin{array}{cc} \mathbf V & \mathbf N\end{array}\right]^\top
	\right)^{-1} 
	\\ &= 
	\left[\begin{array}{cc} \mathbf V & \mathbf N \end{array}\right]
	\left[\begin{array}{cc} (\mathbf \Sigma^2 + \gamma \mathbf I_\ell)^{-1} & 0 \\
	  0 & \gamma^{-1} \mathbf I_{d-\ell} \end{array}\right]
	\left[\begin{array}{cc} \mathbf V & \mathbf N\end{array}\right]^\top 
	\\&= 
	\mathbf V \left(\mathbf \Sigma^2 + \gamma \mathbf I_\ell \right)^{-1} \mathbf V^\top + \mathbf N \left(\gamma^{-1} \mathbf I_{d-\ell} \right) \mathbf N^\top 
	\\&= 
	\mathbf V \left(\mathbf \Sigma^2 + \gamma \mathbf I_\ell \right)^{-1} \mathbf V^\top + \gamma^{-1} \mathbf N \mathbf N^\top 
	\\ &= 
	\mathbf V \left(\mathbf \Sigma^2 + \gamma \mathbf I_\ell \right)^{-1} \mathbf V^\top + \gamma^{-1} \left(\bI - \mathbf V \mathbf V^\top \right).
	\end{align*}
	Thus
	$
	\hat{\bx}_\gamma = (\mathbf V \mathbf \Sigma^2 \mathbf V^\top + \gamma \bI)^{-1} \mathbf c.  
	$
	
	The rest of Algorithm~\ref{fd-alg} is equivalent to a normal FD algorithm with $\mathbf B = \mathbf \Sigma \mathbf V^\top$. Thus
	$
	\hat{\bx}_\gamma = (\mathbf B^\top \mathbf B + \gamma \bI)^{-1} \mathbf c,  
	$
	and satisfies (\ref{eq:fd-err}). 
	Together with Lemma~\ref{thm:general} and $\lambda_{min} (\mathbf B^\top \mathbf B) \ge 0$, we have
	\[
	\|\hat{\bx}_\gamma - \bx_\gamma\| \le \frac{\|\bA^\top \bA - \mathbf B^\top \mathbf B\|_2}{\lambda_{min} (\mathbf B^\top \mathbf B) + \gamma} \|\bx_\gamma\| \le \frac{\|\bA - \bA_k\|_F^2}{\gamma(\ell - k)} \|\bx_\gamma\|.  
	\]
	By setting $\frac{\|\bA - \bA_k\|_F^2}{\gamma(\ell - k)} = \eps$ and
	solving $\ell$ or $\gamma$, we get the guarantee for coefficients error.
	Plugging the FD result (\ref{eq:fd-err}) into Lemma \ref{thm:general_risk} gives us the risk bound.
	The running time and required space of a FD algorithm is $O(n \ell d)$ and $O(\ell d)$.
	Therefore the total running time is $O(nd) + O(\ell d) + O(n \ell d) = O(n\ell d)$,
	and the running space is $O(\ell d) + O(\ell d) + O(\ell d) = O(\ell d)$.
\end{proof}

\paragraph{Interpretation of bounds.}
Note that the only two approximations in the analysis of Theorem \ref{thm:fd}
arise from Lemma \ref{thm:general} and in the Frequent Directions bound.  
Both bounds are individually tight (see Lemma \ref{thm:general_tight},
and Theorem 4.1 by \citet{GhashamiSICOMP16FD}), so while this is not a complete
lower bound, it indicates this analysis approach cannot be asymptotically improved.  

We can also write the space directly for this algorithm to achieve
$\|\hat \bx - \bx_\gamma\| \leq \eps \|\bx_\gamma\|$ as
$O(d(k + \frac{1}{\eps} \frac{\|\bA - \bA_k\|_F^2}{\gamma}))$.  
Note that this holds for all choices of $k < \ell$, so the space is actually
$O(d \cdot \min_{0 < k < \ell} (k + \frac{1}{\eps} \frac{\|\bA - \bA_k\|_F^2}{\gamma}))$.  
So when $\gamma = \Omega(\|\bA - \bA_k\|_F^2)$ (for an identified best choice of $k$)
then this uses $O(d(k+\frac{1}{\eps}))$ space, and if this holds for a constant $k$,
then the space is $O(d/\eps)$.
This identifies the ``regularizer larger than tail'' case as when this
algorithm is in theory appropriate.
Empirically we will see below that it works well more generally.  

\subsection{Using Robust Frequent Directions}
If we use RFD instead of FD, we store $\alpha$ in addition to
$\mathbf B = \mathbf\Sigma \mathbf V^\top$;
see Algorithm \ref{rfd-alg}. Then the approximation of $\bA^\top \bA$ is
$
\mathbf B^\top \mathbf B + \alpha \bI = \mathbf V \mathbf \Sigma^2 \mathbf V^\top + \alpha \bI.  
$
We approximate $\bx_\gamma$ by
$
\hat{\bx}_\gamma = \left(\mathbf V \mathbf \Sigma^2 \mathbf V^\top + (\gamma + \alpha) \bI\right)^{-1} \mathbf c.  
$
Line 6 in Algorithm~\ref{rfd-alg} is added to maintain $\gamma + \alpha$.
The remainder of the algorithm is the same as Algorithm \ref{fd-alg}.
The theoretical results slightly improve those for FD.  
Theorem~\ref{thm:rfd} and its proof is established by replacing FD result with
RFD result (\ref{eq:rfd-err}) in Theorem~\ref{thm:fd}.

\begin{algorithm}
	\caption{Robust Frequent Directions Ridge Regression (RFDrr)}
	\label{rfd-alg}
	\begin{algorithmic}[1]  
		\STATE {\bfseries Input:} $\ell, \bA \in \mathbb, \bb, \gamma$
		\STATE $\mathbf \Sigma \gets 0^{\ell \times \ell}, \mathbf V^\top \gets 0^{\ell \times d}, \mathbf c \gets 0^d$
		\FOR{$\bA_\ell, \bb_\ell \in \bA, \bb$} 
		\STATE $\_, \mathbf \Sigma', \mathbf V'^\top \gets \textsc{svd}([\mathbf V \mathbf \Sigma^\top; \bA_\ell^\top]^\top)$ \label{rfd-alg:fd1}
		\STATE $\mathbf \Sigma \gets \sqrt{\mathbf \Sigma'^2_{\ell} - \sigma_{\ell+1}^2 \mathbf I_\ell}, \mathbf V \gets \mathbf V'_{\ell}$ \label{rfd-alg:fd2}
		\STATE $\gamma \gets \gamma + \sigma_{\ell+1}^2/2$ \label{rfd-alg:gamma}
		\STATE $\mathbf c \gets \mathbf c + \bA_\ell^\top \bb_\ell$ \label{rfd-alg:c}
		\ENDFOR
		\STATE $\mathbf c' \gets \mathbf V^\top \mathbf c$ \label{rfd-alg:x1}
		\STATE $\hat{\bx}_\gamma  \hspace{-2pt} \gets  \hspace{-1pt} \mathbf V (\mathbf \Sigma^2 + \gamma \mathbf I_\ell)^{-1} \mathbf c' + \gamma^{-1} (\mathbf c - \hspace{-1pt} \mathbf V \mathbf c')$ \label{rfd-alg:x2}
		\STATE \textbf{return} $\hat{\bx}_\gamma$
	\end{algorithmic}
\end{algorithm}

\begin{thm}
	\label{thm:rfd}
Given $\bA \in \mathbb R^{n \times d}$, $\bb \in \mathbb R^n$,
	let $\bx_\gamma = (\bA^\top \bA + \gamma \bI)^{-1} \bA^\top \bb$ and $\hat{\bx}_\gamma$ be output of Algorithm~\ref{rfd-alg} with input $(\ell, \bA, \bb, \gamma)$. If
	\[
	\ell \ge \frac{\|\bA - \bA_k\|_F^2}{2\gamma\eps} + k, 
	\quad \text{or} \quad 
	\gamma \ge \frac{\|\bA - \bA_k\|_F^2}{2\eps(\ell - k)}
	\]
	then 
	\[
    \|\hat{\bx}_\gamma - \bx_\gamma\| \le \eps \|\bx_\gamma\|
    \]
	It also holds that
	$\|\langle\hat{\bx}_\gamma, \mathbf a'\rangle - \langle\bx_\gamma, \mathbf a'\rangle\| \le \eps \|\bx_\gamma\|\|\mathbf a'\|$
	for any $\mathbf a' \in \mathbb R^d$, and $\|\bA'\hat{\bx}_\gamma - \bA' \bx_\gamma\| \le \eps \|\bx_\gamma\|\|\bA'\|_2$
	for any $\bA' \in \mathbb R^{m \times d}$. 
	The squared statistical bias $\cB^2 (\hat \bx_\gamma) \le \left(1 + \frac{4\eps^2} {\gamma^2}\|\bA\|_2^4 \right) \cB^2(\bx_\gamma)$,
	and the statistical variance $\cV (\hat \bx_\gamma) \le (1 + \|\bA\|_2^2/\gamma) \cV(\bx_\gamma)$.
	The running time is  $O(n \ell d)$ and requires space $O(\ell d)$.
\end{thm}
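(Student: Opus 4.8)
The plan is to follow the proof of Theorem~\ref{thm:fd} essentially line for line, substituting the RFD guarantee~(\ref{eq:rfd-err}) for the FD guarantee~(\ref{eq:fd-err}) and taking $\mathbf C^\top \mathbf C = \mathbf B^\top \mathbf B + \alpha \bI$ as the deterministic approximation of $\bA^\top \bA$. First I would confirm that Algorithm~\ref{rfd-alg} returns exactly $\hat{\bx}_\gamma = (\mathbf C^\top \mathbf C + \gamma \bI)^{-1} \mathbf c = (\mathbf B^\top \mathbf B + (\gamma+\alpha)\bI)^{-1}\mathbf c$. This is immediate: line~\ref{rfd-alg:gamma} folds $\alpha$ into the regularizer, so the variable used in lines~\ref{rfd-alg:x1}--\ref{rfd-alg:x2} is the original $\gamma$ plus $\alpha$, and those two lines then apply verbatim the eigendecomposition identity already established in Theorem~\ref{thm:fd}, now for $(\mathbf V \mathbf \Sigma^2 \mathbf V^\top + (\gamma+\alpha)\bI)^{-1}\mathbf c$ split over the range and null space of $\mathbf V$.

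For the coefficient error I would invoke Lemma~\ref{thm:general} with this $\mathbf C$. Since $\mathbf B^\top \mathbf B$ is positive semidefinite, $\lambda_{\min}(\mathbf C^\top \mathbf C) = \lambda_{\min}(\mathbf B^\top \mathbf B) + \alpha \ge 0$, so the denominator is at least $\gamma$, while the numerator $\|\bA^\top \bA - \mathbf C^\top \mathbf C\|_2 = \|\bA^\top \bA - \mathbf B^\top \mathbf B - \alpha \bI\|_2$ is bounded by~(\ref{eq:rfd-err}). Combining gives $\|\hat{\bx}_\gamma - \bx_\gamma\| \le \frac{\|\bA - \bA_k\|_F^2}{2\gamma(\ell-k)}\|\bx_\gamma\|$; setting the scalar factor equal to $\eps$ and solving for $\ell$ or for $\gamma$ produces the two stated conditions, the extra factor of $2$ relative to Theorem~\ref{thm:fd} being precisely the $\frac{1}{2}$ in~(\ref{eq:rfd-err}). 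The inner-product claim then follows from Cauchy--Schwarz, $|\langle \hat{\bx}_\gamma - \bx_\gamma, \mathbf a'\rangle| \le \|\hat{\bx}_\gamma - \bx_\gamma\|\,\|\mathbf a'\|$, and the matrix claim from submultiplicativity of the operator norm, $\|\bA'(\hat{\bx}_\gamma - \bx_\gamma)\| \le \|\bA'\|_2\,\|\hat{\bx}_\gamma - \bx_\gamma\|$.

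For the risk I would feed the same $\mathbf C^\top \mathbf C = \mathbf B^\top \mathbf B + \alpha \bI$ into Lemma~\ref{thm:general_risk}. The squared-bias inflation is the generic $\left(1 + \frac{1}{\gamma^4}\|\bA\|_2^4\|\bA^\top\bA - \mathbf C^\top\mathbf C\|^2\right)$, into which I substitute~(\ref{eq:rfd-err}) and re-express the covariance error through $\eps$ to obtain the stated constant. The variance is where RFD genuinely improves on FD: here I would exploit $\mathbf C^\top \mathbf C \succeq \alpha \bI$, so the effective regularizer seen by $\hat{\bx}_\gamma$ is $\gamma+\alpha \ge \gamma$, and propagating this larger regularizer through the pseudoinverse estimate in Lemma~\ref{thm:general_risk} should collapse the FD factor $(1+\|\bA\|_2^2/\gamma)^2$ to the single power $(1+\|\bA\|_2^2/\gamma)$. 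Since RFD stores only the extra scalar $\alpha$, the running time $O(n\ell d)$ and space $O(\ell d)$ are identical to Theorem~\ref{thm:fd}.

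I expect the variance bound to be the one step that is not a purely mechanical substitution: I would need to check that carrying $\alpha$ through Lemma~\ref{thm:general_risk}, where $\gamma$ is effectively upgraded to $\gamma+\alpha$ in the pseudoinverse denominator while the reference solution $\bx_\gamma$ still uses $\gamma$, really yields the claimed single power with no residual dependence on $\alpha$. The coefficient-error and squared-bias parts, by contrast, are direct substitutions into lemmas already proved.
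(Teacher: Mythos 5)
Your overall route is exactly the paper's: the paper proves Theorem~\ref{thm:rfd} in a single sentence, by substituting the RFD guarantee (\ref{eq:rfd-err}) for the FD guarantee (\ref{eq:fd-err}) in the proof of Theorem~\ref{thm:fd}, with $\bB^\top \bB + \alpha \bI$ playing the role of the sketch. Your treatment of the algorithm's output (folding $\alpha$ into the regularizer and reusing the range/null-space inversion identity), the coefficient error (Lemma~\ref{thm:general} with denominator $\lambda_{min}(\bB^\top \bB)+\alpha+\gamma \ge \gamma$ and numerator from (\ref{eq:rfd-err})), the Cauchy--Schwarz consequences, and the time/space accounting all match what the paper intends. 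One small imprecision: under the stated hypothesis, $\|\bA^\top \bA - \bB^\top \bB - \alpha\bI\|_2 \le \eps\gamma$, so substitution into Lemma~\ref{thm:general_risk} gives the bias factor $1+\frac{\eps^2}{\gamma^2}\|\bA\|_2^4$, which is \emph{stronger} than the stated $1+\frac{4\eps^2}{\gamma^2}\|\bA\|_2^4$ (the $4$ appears only if one measures the covariance error against $\bB^\top \bB$ alone); this is fine, since the stated bound follows a fortiori.

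The genuine gap is the variance bound, and the step you flagged as needing verification in fact fails. Propagating the enlarged regularizer $\gamma+\alpha$ through the pseudoinverse estimate in Lemma~\ref{thm:general_risk} gives $\cV(\hat\bx_\gamma) \le s^2\|\bA^\top \bA\|_F^2/(\gamma+\alpha)^2$, and combining with $\cV(\bx_\gamma) \ge s^2\|\bA^\top \bA\|_F^2/(\|\bA\|_2^2+\gamma)^2$ yields only $\cV(\hat\bx_\gamma) \le \bigl((\|\bA\|_2^2+\gamma)/(\gamma+\alpha)\bigr)^2 \cV(\bx_\gamma)$. For this to be at most $(1+\|\bA\|_2^2/\gamma)\cV(\bx_\gamma)$ you would need $(\gamma+\alpha)^2 \ge \gamma(\gamma+\|\bA\|_2^2)$. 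But under the theorem's hypothesis $2\alpha = \sum \sigma_{\ell+1}^2 \le \|\bA-\bA_k\|_F^2/(\ell-k) \le 2\eps\gamma$, so $\alpha \le \eps\gamma$, while $\|\bA\|_2^2$ is completely unconstrained (the hypothesis restricts only the tail); whenever $\|\bA\|_2^2 > (2\eps+\eps^2)\gamma$ the needed inequality fails, and $\alpha$ can even be $0$. So your mechanism recovers only the same squared factor $(1+\|\bA\|_2^2/\gamma)^2$ that FD gets. To be fair, the paper is no better off: mechanical substitution into Lemma~\ref{thm:general_risk} produces the squared factor regardless of which deterministic sketch is used, so the single-power claim is not established by the paper's one-line proof either. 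Proving it (if it is true as stated) appears to require the stronger RFD structural property $0 \preceq \bA^\top \bA - \bB^\top \bB \preceq 2\alpha\bI$, i.e.\ $\bB^\top \bB + \alpha\bI \succeq \bA^\top \bA - \alpha \bI$ together with $\bB^\top \bB + \alpha\bI \succeq \alpha\bI$, applied direction by direction -- an argument that appears in neither your proposal nor the paper.
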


\section{\uppercase{Experiments}}

We compare new algorithms \FDRR and {\RFDRR} with other FD-based algorithms
and randomized algorithms on synthetic and real-world datasets.
We focus only on streaming algorithms.  

\paragraph{Competing algorithms} include:
\\
$\bullet$ \textsf{\iSVDRR}: Truncated incremental SVD~\citep{Brand02ISVD,Hall98IEfC},
also known as Sequential Karhunen–Loeve \citep{Levey00SKL}, for sketching,
has the same framework as Algorithm~\ref{fd-alg} but replaces Line 5
$\mathbf \Sigma \gets \sqrt{\mathbf \Sigma'^2_{\ell} - \sigma_{\ell+1}^2 \mathbf I_\ell}, \mathbf V \gets \mathbf V'_{\ell}$
with $\mathbf \Sigma \gets \mathbf \Sigma'_{\ell}, \mathbf V \gets \mathbf V'_{\ell}$.
That is, it simply maintains the best rank-$\ell$ approximation after each batch.  
\\
$\bullet$ \textsf{\NOFDRR:} This uses a two-level FD variant proposed by
\citet{HuangPMLR18} for sketching, and described in more detail in Section \ref{sec:FD}. 
\\
$\bullet$ \textsf{\RPRR:} This uses generic (scaled) \{-1,+1\} random projections~\citep{Sarlos06RP}.
For each batch of data, construct a random matrix $\mathbf S \in \{-\sqrt\ell, \sqrt\ell\}^{\ell \times \ell}$,
set $\mathbf C = \mathbf C + \mathbf S \bA$ and $\mathbf c = \mathbf c + \mathbf S \bb$.
Output $\hat{\bx}_\gamma = (\mathbf C^\top \mathbf C + \gamma \bI)^{-1} \mathbf C^\top \mathbf c$ at the end.
\\    
$\bullet$ \textsf{\CSRR:} This is the sparse version of \RPRR using the CountSketch~\citep{CW13}.
The random matrix $\mathbf S$ are all zeros except for one -1 or 1 in each column with a random location.
\\ 
$\bullet$ \textsf{\RR:} This is the naive streaming ridge regression which
computes $\bA^\top \bA$ and $\bA^\top \bb$ cumulatively (a batch size of $1$).
In each step it computes $\bA^\top \bA \leftarrow \bA^\top \bA + \ba_i^\top \ba_i$
where $\ba_i^\top \ba_i$ is an outer product of row vectors, and $\bc \leftarrow \bc + \ba_i^\top b_i$.
Then it outputs $\bx_\gamma = (\bA^\top \bA + \gamma \bI)^{-1} \bc$ at the end.
This algorithm uses $d^2$ space and has no error in $\bA^\top \bA$ or $\bc$.
This algorithm's found ridge coefficients $\bx_\gamma$ are used to compute the
coefficients error of all sketching algorithms.

\paragraph{Datasets.} 
We use three main datasets that all have dimension $d=2^{11}$,
training data size $n=2^{13}$, and test data size $n_t = 2^{11}$. 

\begin{figure}
	\centering
	\includegraphics{./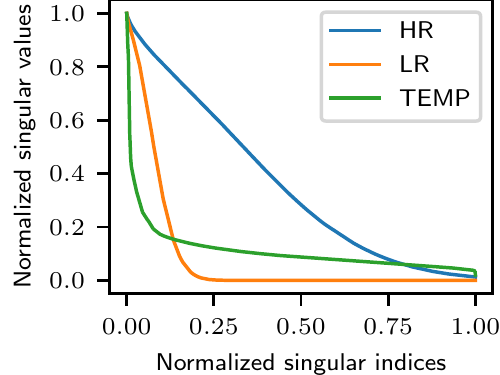}
	\caption{Datasets singular values}
	\label{fig:singulars}
\end{figure}

\emph{Synthetic datasets.} 
Two synthetic data-sets are low rank (\LR) and high rank (\HR),
determined by an effective rank parameter $R$;
set $R=\lfloor0.1 d\rfloor$ and $R = \lfloor0.5 d\rfloor$ respectively,
which is 10 and 50 percent of $d$.
This $R$ is then used as the number of non-zero coefficients $\bx$ and the
number of major standard deviations of a multivariate normal distribution for
generating input points $\bA$.
Each row vector of $\bA \in \mathbb R^{n \times d}$ are generated by normal
distribution with standard deviations $s_i = \exp(-\frac{i^2}{R^2})$ for $i=0,1,...,d-1$,
so the maximal standard deviation is $s_0 = 1$.
Figure~\ref{fig:singulars} shows the singular value distributions datasets,
normalized by their first singular values, and indices normalized by $d$.
The linear model coefficients $\bx \in \mathbb R^d$ have first $R$ entries non-zero,
they are generated by another standard normal distribution,
then normalized to a unit vector so the gradient of the linear model is 1.
A Gaussian noise $\iZ \sim \cN(\mathbf 0, 4\bI)$ is added to the outputs,
i.e. $\bb = \bA \bx + \mathbf \iZ$.
Finally, we rotate $\bA$ by a discrete cosine transform.

\begin{figure*}
    \centering
	\includegraphics{./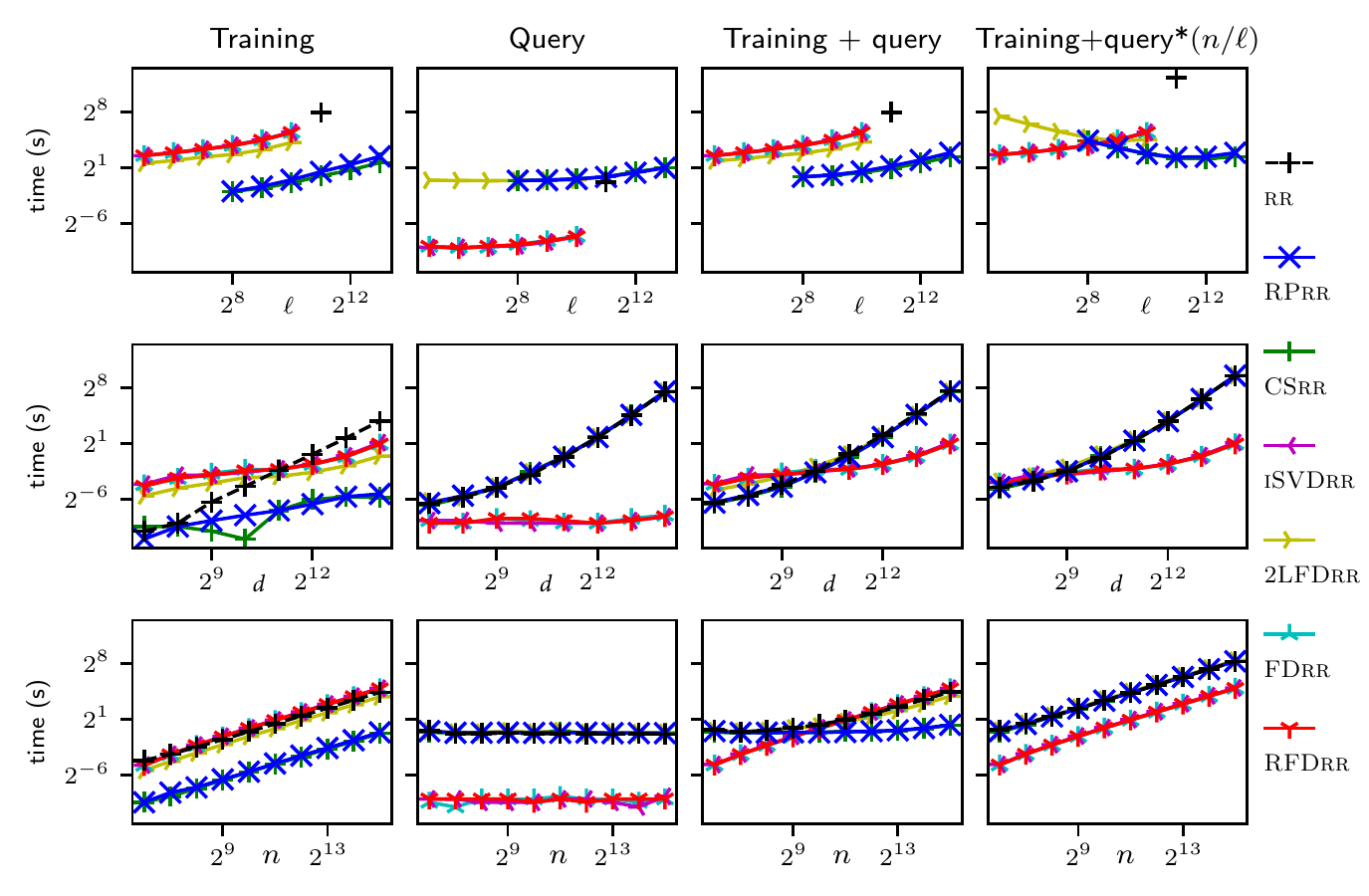}
    \vspace{-10pt}
	\caption{Running time (seconds) as a function of:
	           sketch size parameter $\ell$ (Row 1),  
	           data dimension $d$ (Row 2), and
	           training set size $n$ (Row 3).}
	\label{fig:runing-time}
\end{figure*}

\emph{\TEMP: Temperature sequence.}
This is derived from the temperature sequence recorded hourly from 1997 to 2019 at an 
international airport. 
To model an AR process, we compute the difference sequence between hourly temperatures,
and then shingle this data, so $\ba_i$ is $d$ consecutive differences starting
at the $i$th difference, and $b_i$ is the next (the $(i+d)$th) difference between temperatures.
Then the \TEMP dataset matrix $\bA$ is a set of $n$ randomly chosen (without replacement) such shingles.  

\paragraph{Choice of $\gamma$.}
We first run \RR on training datasets with different $\gamma$s,
then choose the ones which best minimize $\|\bA_\mathsf{test} x_\gamma^* - \bb_\mathsf{test}\|$
using a held out test dataset $(\bA_\mathsf{test},\bb_\mathsf{test})$.
The best $\gamma$s for low rank \LR and high rank \HR datasets are 4096 and
32768 respectively, the best $\gamma$ for \TEMP dataset is 32768.
These $\gamma$ values are fixed for the further experiments.
Since the $\gamma$ value is only used to compute the solution $\bx_\gamma$ or
$\hat \bx$ (storing $\alpha$ separate from $\gamma$ in \RFDRR), so this choice
could be made when calculating the solution using a stored test set after sketching.
To avoid this extra level of confounding error into the evaluation process,
we simply use this pre-computed $\gamma$ value.  

\subsection{Evaluation}

We run these 6 algorithms with different choices of $\ell$ on these three datasets.
They are implemented in python using numpy, and are relatively straightforward.
For completeness, we will release de-anonymized code and data for reproducibility after double-blind peer review.  
We first train them on the training sets, query their coefficients,
then compute the coefficients errors with \RR and prediction errors with outputs.
We repeat all these experiments 10 times and show the mean results.

\begin{figure*}
    \centering
	\includegraphics{./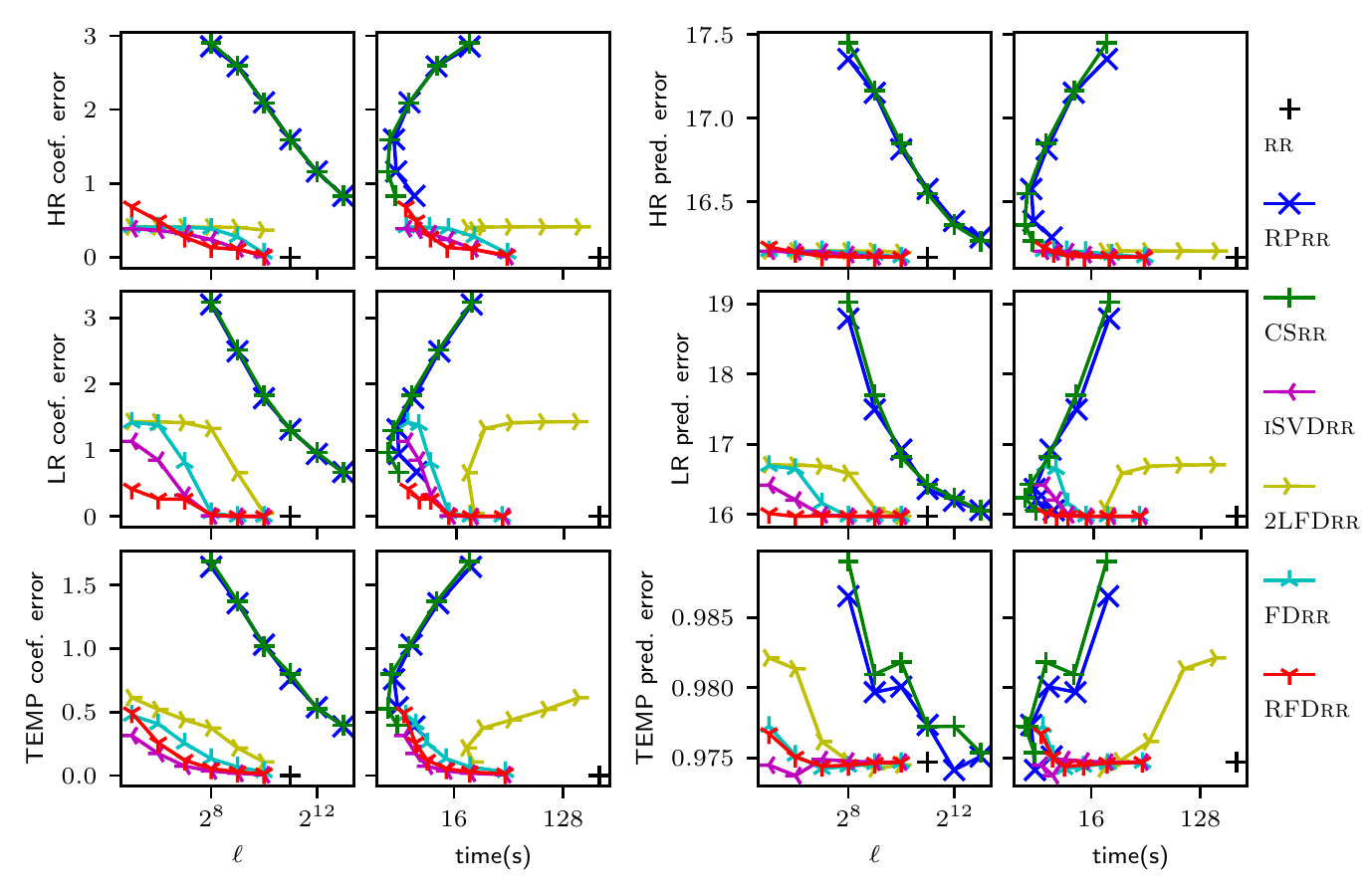}
    \vspace{-10pt}
	\caption{Errors vs space (measured by rows $\ell$) and time (measured by seconds).
	The time shown is the training time + the query time*$\frac{n}{\ell}$ to simulate a query every batch.
	The left double column shows coefficient error, and the right double column shows prediction error.
	Note that the runtime for \CSRR and \RPRR form a `C' shape since these are
	query-dominated, and the runtime initially decreases as the number of
	queries (number of ``batches'') decreases, as $\ell$ increases, like in
	Figure \ref{fig:runing-time}, Row 1.}
	\label{fig:errors}
\end{figure*}

\paragraph{Running time.}
In Figure~\ref{fig:runing-time}, Row 1 we show the running time (on HR) by
training time, solution query (computation of the coefficients) time, their sum,
and training time + query time$* n/\ell$ simulating making a query every batch.
The other datasets are the same size, and have the same runtimes.  
FD based algorithms are slower then randomized algorithms during training,
but much faster during query solutions since the sketch sizes are smaller and more processed. 
They maintain the SVD results of the sketch so the matrix inversion is mostly precomputed.  
Note that this precomputation is not available in the two-level \NOFDRR either,
hence this also suffers from higher query time.  

When we add the training time and $(n/\ell)$ queries, then \iSVDRR, \RFDRR,
and \FDRR are the fastest for $\ell$ below about $300$ (past $2^8$).
Note that in this plot the number of batches and hence queries decreases as $\ell$ increases,
and as a result for small $\ell$ the algorithms with cost dominated by queries
(\CSRR, \RPRR, and \NOFDRR) have their runtime initially decrease.  
All algorithms are generally faster than \RR {} -- the exception is the random
projection algorithms (\CSRR and \RPRR) which are a bit slower for query time,
and these become worse as $\ell$ becomes greater than $d$.

In Figure \ref{fig:runing-time}, Row 2 and 3 we show the runtime of the algorithms as both $n$ and $d$ increase. 
We fix $\ell = 2^6$.  When we vary $d$ we fix $n = 2^8$, and when we vary $n$ we fix $d = 2^{11}$.
As expected, the runtimes all scale linearly as $n$ grows, or the sum of two linear times for (training+query) time.
As $d$ grows, FD-based algorithms (not including \NOFDRR) overcome RP-based
algorithms (as well as \RR and \NOFDRR) even with one query.
The query time for the latter increase too fast, cubic on $d$, but is linear for FD-based algorithms.

\paragraph{Accuracy.}
Let $\bx_\gamma$ be the coefficients solutions of RR and $\hat{\bx}_\gamma$ be its approximation,
let $\hat{\bb}$ be the predicted values by RR, $\bA \bx_\gamma$, or its approximation,
$\bA \hat{\bx}_\gamma$;
for each algorithm we compute the coefficients error
($\textsf{\small coef. error} = \|\hat{\bx}_\gamma - \bx_\gamma\| / \|\bx_\gamma\|$)
and the prediction error ($\textsf{\small pred. error} = \|\hat{\bb} - \bb\|^2 / n$).
Figure~\ref{fig:errors} shows these errors versus space in terms of $\ell$,
and (training + $\frac{n}{\ell}$query) time in seconds. 
For the high rank data (top row), all FD-based algorithms
(\FDRR, \RFDRR, \NOFDRR, as well as \iSVDRR) have far less error than the
random projection algorithms (\RPRR and \CSRR).
For very small $\ell$ size \RFDRR does slightly worse than the other FD variants,
likely because it adds too much bias because the ``tail'' is too large with small $\ell$. 

For the low rank data and real-world \TEMP data the errors are more spread out,
yet the FD-based algorithms still do significantly better as a function of space ($\ell$).
Among these \RFDRR (almost) always has the least error (for small $\ell$) or
matches the best error (for larger $\ell$).
The only one that sometimes improves upon \RFDRR, and is otherwise the next
best is the huersistic \iSVDRR which has no guarantees, and likely will fail
for adversarial data~\citep{GhashamiTKDE16}.  
In terms of the time, the random projection algorithms can be a bit faster
(say $4$ seconds instead of $5-10$ seconds), but then achieve more coefficient error.
In particular, \RFDRR always can achieve the least coefficient error,
and usually the least coefficient error for any given allotment of time.  
For prediction error as a function of time (the rightmost column of Figure \ref{fig:errors}),
the results are more muddled.
Many algorithms can achieve the minimum error (nearly matching \RR) in the
nearly best runtime (about $5-7$ seconds).
The FD-based algorithms are roughly at this optimal points for all $\ell$
parameters tried above $\ell = 2^5$, and hence consistently achieves these
results in small space and time.  

\section{\uppercase{Conclusion \& Discussion}}

We provide the first streaming sketch algorithms that can apply the optimally
space efficient Frequent Directions sketch towards regression, focusing on ridge regression.
This results in the first streaming deterministic sketch using $o(d^2)$ space in $\R^d$.
We demonstrate that our bounds will be difficult to be improved, and likely cannot be.  
We also prove new risk bounds, comparable to previous results,
but notably have a variance bound independent of the specific sketch matrix chosen. 
Similar to prior observations~\citep{MM18,cohen_et_al:2016:ORS},
we show the ridge term makes regression easier to sketch.   
Moreover, our experiments demonstrate that while these FD-based algorithms have
larger training time than random projection ones, they have less empirical error,
their space usage is smaller, and query time is often far more efficient.
Our proposed sketches clearly have the best space/error trade-off.    

\paragraph{Discussion relating to PCR.}
Principal Component Regression (PCR) is a related approach;
it identifies the top $k$ principal components $\mathbf V_k$ of $\bA$ and performs regression using,
[$\pi_{\mathbf V_k}(\bA)$, $\bb$],  the projection onto the span of $\mathbf V_k$.
For this to be effective, these components must include the directions
meaningfully correlated with $\bA^\top \bb$.
However, when the top $k' > k$ singular vectors of $\bA$ are all similar,
which of the corresponding top $k'$ singular vectors are in the top $k$ is not stable.
If a meaningful direction among the top-$k$ is not retained in a top-$k$ sketch $\mathbf B$,
then while the norms of $\bA$ are preserved using a sketch $\mathbf B$,
the regression result may be quite different.
Hence, PCR is not stable in the same way as \RR, and precludes approximation
guarantees in the strong form similar to ours.  

\paragraph{Acknowledgements.}
Jeff M. Phillips thanks his support from NSF IIS-1816149, CCF-1350888, CNS-1514520, CNS-1564287, and CFS-1953350.

\bibliography{reference}

\onecolumn

\aistatstitle{A Deterministic Streaming Sketch for Ridge Regression \\
Supplementary Materials}

\thispagestyle{empty}

\appendix
\vfilneg

\section{\uppercase{Other Variance Bounds for Risk}}

We provide two different bounds for variance $\cV (\hat \bx_\gamma)$ that are
not strictly comparable with the one provided in Lemma \ref{thm:general_risk}.  

\begin{lem}\label{thm:var_bound_2}
	Considering the data generation model and the risk described in Lemma \ref{thm:general_risk}.
	The variance of the approximate solution
	$\hat \bx_\gamma = (\bC^\top \bC + \gamma \bI)^{-1} \bA^\top \bb$ satisfy
	\begin{align*}
	\cV (\hat \bx_\gamma) \leq \left( 1 + \frac1\gamma \|\bA\|_2^2 \|\bA^\top \bA - \bC^\top \bC\|_2^2 \|\bA^\dagger\|_2^2\right) \cV (\bx_\gamma)
	\end{align*}	
\end{lem}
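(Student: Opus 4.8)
The plan is to reduce everything to a ratio of Frobenius norms and then express the perturbed quantity as a clean multiplicative perturbation of the unperturbed one. Starting from the exact variance formulas already established in Lemma~\ref{thm:general_risk}, I have $\cV(\bx_\gamma) = s^2\|\bA(\bK+\gamma\bI)^{-1}\bA^\top\|_F^2$ and $\cV(\hat\bx_\gamma) = s^2\|\bA(\hat\bK+\gamma\bI)^{-1}\bA^\top\|_F^2$, where I reuse the shorthand $\bK = \bA^\top\bA$ and $\hat\bK = \bC^\top\bC$ from that proof. Writing $\bM = \bA(\bK+\gamma\bI)^{-1}\bA^\top$ and $\hat\bM = \bA(\hat\bK+\gamma\bI)^{-1}\bA^\top$, the common factor $s^2$ cancels in the ratio, so the claim reduces to showing $\|\hat\bM\|_F^2 \le \big(1 + \tfrac1\gamma\|\bA\|_2^2\|\bK-\hat\bK\|_2^2\|\bA^\dagger\|_2^2\big)\|\bM\|_F^2$.

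The central step is to write $\hat\bM = (\bI + \mathbf E)\bM$ for a suitable error operator $\mathbf E$. First I would invoke the resolvent identity $(\hat\bK+\gamma\bI)^{-1} - (\bK+\gamma\bI)^{-1} = (\hat\bK+\gamma\bI)^{-1}(\bK-\hat\bK)(\bK+\gamma\bI)^{-1}$ -- the same identity used in the proof of Lemma~\ref{thm:general} -- to obtain $\hat\bM = \bM + \bA(\hat\bK+\gamma\bI)^{-1}(\bK-\hat\bK)(\bK+\gamma\bI)^{-1}\bA^\top$. The key algebraic move is then to pull $\bM$ out of the error term on the right. Assuming $\bA$ has full column rank so that $\bA^\dagger\bA = \bI$, I insert $\bA^\dagger\bA$ immediately after the factor $(\bK-\hat\bK)$; this regroups the tail $\bA^\dagger\bA(\bK+\gamma\bI)^{-1}\bA^\top$ as $\bA^\dagger\bM$, yielding $\hat\bM = (\bI + \mathbf E)\bM$ with $\mathbf E = \bA(\hat\bK+\gamma\bI)^{-1}(\bK-\hat\bK)\bA^\dagger$.

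From here the bound is routine spectral bookkeeping: submultiplicativity gives $\|\hat\bM\|_F \le \|\bI+\mathbf E\|_2\,\|\bM\|_F$, and $\|\mathbf E\|_2 \le \|\bA\|_2\,\|(\hat\bK+\gamma\bI)^{-1}\|_2\,\|\bK-\hat\bK\|_2\,\|\bA^\dagger\|_2$. Since $\hat\bK = \bC^\top\bC$ is positive semidefinite, the eigenvalues of $\hat\bK+\gamma\bI$ are at least $\gamma$, so $\|(\hat\bK+\gamma\bI)^{-1}\|_2 \le 1/\gamma$. Squaring and collecting the spectral-norm factors then produces the stated multiplicative constant on $\cV(\bx_\gamma)$.

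I expect the main obstacle to be the factoring step, not the norm estimates: one must verify that inserting $\bA^\dagger\bA = \bI$ legitimately pulls $\bM$ out on the right, which is precisely where the full-column-rank hypothesis enters and where the $\|\bA^\dagger\|_2$ factor in the final bound originates. The resolvent identity and the positive-semidefinite bound $\|(\hat\bK+\gamma\bI)^{-1}\|_2\le 1/\gamma$ are standard, so the only delicate point is tracking this regrouping carefully and confirming that, unlike the $\bC$-independent variance bound of Lemma~\ref{thm:general_risk}, this bound correctly collapses to the factor $1$ when the sketch is exact (i.e.\ when $\|\bK-\hat\bK\|_2 = 0$).
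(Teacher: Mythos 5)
Your proposal follows essentially the same route as the paper's own proof: the exact variance formulas from Lemma~\ref{thm:general_risk}, the resolvent identity $(\hat\bK+\gamma\bI)^{-1}-(\bK+\gamma\bI)^{-1}=(\hat\bK+\gamma\bI)^{-1}(\bK-\hat\bK)(\bK+\gamma\bI)^{-1}$, the insertion of $\bA^\dagger\bA=\bI$ under the full-column-rank assumption, the resulting factorization $\hat\bM=(\bI+\mathbf E)\bM$ with $\mathbf E=\bA(\hat\bK+\gamma\bI)^{-1}(\bK-\hat\bK)\bA^\dagger$, the mixed-norm submultiplicativity $\|\hat\bM\|_F\le\|\bI+\mathbf E\|_2\|\bM\|_F$, and the bound $\|(\hat\bK+\gamma\bI)^{-1}\|_2\le1/\gamma$. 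Up to the order in which terms are regrouped, this is the paper's chain of equalities and inequalities verbatim.

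The one step you should not describe as routine is the last one, and it is precisely the step at which the paper's own proof is also loose. Write $x=\frac1\gamma\|\bA\|_2\,\|\bA^\top\bA-\bC^\top\bC\|_2\,\|\bA^\dagger\|_2$, so that $\|\mathbf E\|_2\le x$. Triangle inequality plus squaring gives $\|\bI+\mathbf E\|_2^2\le(1+x)^2=1+2x+x^2$, whereas the lemma's stated constant is $1+\frac1\gamma\|\bA\|_2^2\|\bA^\top\bA-\bC^\top\bC\|_2^2\|\bA^\dagger\|_2^2=1+\gamma x^2$. The inequality $(1+x)^2\le1+\gamma x^2$ fails whenever $x$ is small (the cross term $2x$ dominates $\gamma x^2$ as $x\to0$), which is exactly the regime of an accurate sketch; so ``squaring and collecting the spectral-norm factors'' yields the squared constant $(1+x)^2$, not the constant stated in the lemma. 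The paper's final line asserts the stated constant with no further justification, so your reconstruction matches the paper gap-for-gap; but to make the argument airtight you should either state the conclusion in the form $\cV(\hat\bx_\gamma)\le\bigl(1+\tfrac1\gamma\|\bA\|_2\|\bA^\top\bA-\bC^\top\bC\|_2\|\bA^\dagger\|_2\bigr)^2\cV(\bx_\gamma)$, or supply an additional argument for absorbing the cross term, which neither you nor the paper currently provides. Your sanity check that the factor collapses to $1$ when $\bK=\hat\bK$ holds for both forms, so it does not detect this discrepancy.
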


\begin{proof}
\begin{align*}
    \cV (\hat \bx_\gamma)
    =& \mathbb E_\iZ \left[\|\bA \left(\hat \bx_\gamma - \mathbb E_\iZ \left[ \hat \bx_\gamma \right]\right)\|^2 \right]\\
    =& \mathbb E_\iZ \left[\|\bA \left((\bC^\top \bC + \gamma \bI)^{-1} \bA^\top s \iZ\right)\|^2 \right]\\
    =& s^2\|\bA (\bC^\top \bC + \gamma \bI)^{-1} \bA^\top\|_F^2\\
    =& s^2\|\bA \left( (\hat\bK + \gamma \bI)^{-1} - (\bK + \gamma \bI)^{-1} + (\bK + \gamma \bI)^{-1} \right) \bA^\top\|_F^2\\
    =& s^2\|\bA \left( (\hat\bK + \gamma \bI)^{-1} (\bK - \hat\bK) (\bK + \gamma \bI)^{-1} + (\bK + \gamma \bI)^{-1} \right) \bA^\top\|_F^2\\
    =& s^2\|\bA \left( (\hat\bK + \gamma \bI)^{-1} (\bK - \hat\bK) + \bI \right) (\bK + \gamma \bI)^{-1} \bA^\top\|_F^2\\
    =& s^2\|\bA \left( (\hat\bK + \gamma \bI)^{-1} (\bK - \hat\bK) + \bI \right) \bA^+ \bA (\bK + \gamma \bI)^{-1} \bA^\top\|_F^2\\
    \le& s^2\|\bA \left( (\hat\bK + \gamma \bI)^{-1} (\bK - \hat\bK) + \bI \right) \bA^+\|_2^2 \|\bA (\bK + \gamma \bI)^{-1} \bA^\top\|_F^2\\
    =& \|\bA (\hat\bK + \gamma \bI)^{-1} (\bA^\top \bA - \bC^\top \bC)\bA^+ + \bI \|_2^2 \cV (\bx_\gamma)\\
    \le& \left( 1 + \frac1\gamma \|\bA\|_2^2 \|\bA^\top \bA - \bC^\top \bC\|_2^2 \|\bA^\dagger\|_2^2\right) \cV (\bx_\gamma)
\end{align*}
\end{proof}

\begin{lem}\label{thm:var_bound_3}
	Considering the data generation model and the risk described in Lemma \ref{thm:general_risk}.
	The variance of the approximate solution
	$\hat \bx_\gamma = (\bC^\top \bC + \gamma \bI)^{-1} \bA^\top \bb$ satisfy
	\begin{align*}
	\cV (\hat \bx_\gamma) \leq \frac{1}{1-\|\bA^{+}\|^2\|(\bC^\top \bC - \bA^\top \bA)\|_2} \cV (\bx_\gamma)
	\end{align*}	
\end{lem}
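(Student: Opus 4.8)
The plan is to relate the two variances through the resolvent identity together with a self-referential (Neumann-type) estimate, exploiting that $\bA$ has full column rank (as already assumed for the variance in Lemma~\ref{thm:general_risk}). Write $\bK = \bA^\top\bA$, $\hat\bK = \bC^\top\bC$, and set $S = \bA(\hat\bK+\gamma\bI)^{-1}\bA^\top$ and $P = \bA(\bK+\gamma\bI)^{-1}\bA^\top$, so that by the variance formulas of Lemma~\ref{thm:general_risk} we have $\cV(\hat\bx_\gamma) = s^2\|S\|_F^2$ and $\cV(\bx_\gamma) = s^2\|P\|_F^2$. The goal reduces to showing $\|S\|_F^2 \le \frac{1}{1-c}\,\|P\|_F^2$ with $c = \|\bA^+\|^2\,\|\hat\bK-\bK\|_2$.

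First I would apply the resolvent identity $(\hat\bK+\gamma\bI)^{-1} = (\bK+\gamma\bI)^{-1} + (\hat\bK+\gamma\bI)^{-1}(\bK-\hat\bK)(\bK+\gamma\bI)^{-1}$, conjugate it by $\bA$ on the left and $\bA^\top$ on the right, and insert the identities $\bA^\top(\bA^+)^\top = \bI$ and $\bA^+\bA = \bI$ (valid since $\bA$ has full column rank) into the correction term. This factors it through $S$ and $P$, yielding the fixed-point relation $S = P + S W P$ with $W = (\bA^+)^\top(\bK-\hat\bK)\bA^+$, so $\|W\|_2 \le \|\bA^+\|_2^2\,\|\hat\bK-\bK\|_2 = c$. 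The two structural facts I would lean on are that $P$ is symmetric PSD with eigenvalues $\sigma_i^2/(\sigma_i^2+\gamma) < 1$, hence $\|P\|_2 \le 1$, and that $\|W\|_2 \le c$; together these bound the correction, since $\|SWP\|_F \le \|S\|_F\,\|W\|_2\,\|P\|_2 \le c\,\|S\|_F$.

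The target is a self-bound at the level of the squared Frobenius norm, namely $\operatorname{tr}(S^2) \le \operatorname{tr}(P^2) + c\,\operatorname{tr}(S^2)$, which rearranges to exactly $\operatorname{tr}(S^2) \le \frac{1}{1-c}\operatorname{tr}(P^2)$ and gives the claim after multiplying by $s^2$. A complementary view I would keep in hand is the Loewner estimate: since $c\,(\sigma_{\min}^2+\gamma) \ge \|\hat\bK-\bK\|_2$ for this choice of $c$, one gets $\hat\bK+\gamma\bI \succeq (1-c)(\bK+\gamma\bI)$, hence $(\hat\bK+\gamma\bI)^{-1}\preceq \frac{1}{1-c}(\bK+\gamma\bI)^{-1}$ and, after congruence by $\bA$, $0 \preceq S \preceq \frac{1}{1-c}P$.

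The hard part is obtaining the single factor $\frac{1}{1-c}$ rather than $\frac{1}{(1-c)^2}$ on the squared norms. The naive routes both overshoot: taking Frobenius norms in $S = P + SWP$ and applying the triangle inequality gives only $\|S\|_F \le \|P\|_F + c\|S\|_F$, and squaring yields $\frac{1}{(1-c)^2}$; likewise, the Loewner bound $0\preceq S\preceq \frac{1}{1-c}P$ in general only implies $\operatorname{tr}(S^2)\le \frac{1}{(1-c)^2}\operatorname{tr}(P^2)$, since $0\preceq S\preceq \lambda P$ with $\lambda>1$ fails to force $\operatorname{tr}(S^2)\le\lambda\operatorname{tr}(P^2)$ (it already fails for scalars). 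Recovering the tighter constant must therefore use the specific resolvent relationship between $S$ and $P$, not merely an operator inequality: I would expand $\operatorname{tr}(S^2) = \operatorname{tr}(SP) + \operatorname{tr}(S^2 W P)$ straight from the fixed-point identity, bound the cross term by $c\,\operatorname{tr}(S^2)$ via $\|WP\|_2\le c$ and the PSD-ness of $S^2$, and then control the remaining $\operatorname{tr}(SP)$ by $\operatorname{tr}(P^2)$ without reintroducing a square root. This last step is exactly where I expect the real fight to be, and it is plausibly where the deliberately loose choice of perturbation measure $\|\bA^+\|^2\|\hat\bK-\bK\|_2$ (rather than the sharper $\|\hat\bK-\bK\|_2/(\sigma_{\min}^2+\gamma)$) supplies the slack that lets the single-power bound close.
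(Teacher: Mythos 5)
Your proposal is not a proof, and the step you leave open is not merely hard---it is false, and with it the stated bound itself. Everything through the fixed-point relation $S = P + SWP$, the estimates $\|W\|_2 \le c$, $\|P\|_2 \le 1$, and the consequence $(1-c)\,\mathrm{tr}(S^2) \le \mathrm{tr}(SP)$ is correct. But the remaining step, $\mathrm{tr}(SP) \le \mathrm{tr}(P^2)$, fails exactly in the regime this lemma targets (a sketch that shrinks the covariance). Take $\bA$ with orthonormal columns, so $\bA^\top\bA = \bI$ and $\|\bA^+\|_2 = 1$, and take $\bC^\top\bC = (1-c)\bI$ with $0<c<1$, so that $\|\bA^+\|^2\|\bC^\top\bC - \bA^\top\bA\|_2 = c$. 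Then $S = \frac{1}{1-c+\gamma}\bA\bA^\top$ and $P = \frac{1}{1+\gamma}\bA\bA^\top$, hence $\mathrm{tr}(SP) = \frac{d}{(1-c+\gamma)(1+\gamma)} > \frac{d}{(1+\gamma)^2} = \mathrm{tr}(P^2)$. Worse, in this example $\cV(\hat\bx_\gamma)/\cV(\bx_\gamma) = \frac{(1+\gamma)^2}{(1-c+\gamma)^2}$, which tends to $\frac{1}{(1-c)^2} > \frac{1}{1-c}$ as $\gamma \to 0$, so for small $\gamma$ the lemma's inequality is violated outright. Consequently no completion of your plan (nor any other argument) can produce the single power $\frac{1}{1-c}$ without an extra hypothesis tying $\gamma$ to $\|\bA\|_2^2$; the honest output of your fixed-point relation, or of your Loewner bound $0 \preceq S \preceq \frac{1}{1-c}P$, is $\cV(\hat\bx_\gamma) \le \frac{1}{(1-c)^2}\,\cV(\bx_\gamma)$, and your suspicion that the loose perturbation measure supplies enough slack does not pan out: the slack is a factor $(\sigma_{\min}^2+\gamma)/\sigma_{\min}^2$, which is near $1$ precisely in the failing regime.

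For comparison, the paper's own proof takes a different route and stumbles at the very spot you flagged, so the missing idea is not something you overlooked. Following \citet{WGM18}, it uses full column rank to rewrite $\bA(\bC^\top\bC+\gamma\bI)^{-1}\bA^\top$ via pseudoinverses as (the column-space restriction of) $\left({\bA^+}^\top\bC^\top\bC\,\bA^+ + \gamma(\bA^\top\bA)^{-1}\right)^{-1}$, and proves the spectral bound $\|E\|_2 \le c$ for $E = {\bA^+}^\top\bC^\top\bC\,\bA^+ - \bI$; it then converts this, in a single inequality, into the factor $\frac{1}{1-c}$ on the squared Frobenius norm. That is exactly the leap you declined to make, and it is invalid: $\|E\|_2\le c$ gives only $\left(\bI + E + \gamma(\bA^\top\bA)^{-1}\right)^{-1} \preceq \frac{1}{1-c}\left(\bI + \gamma(\bA^\top\bA)^{-1}\right)^{-1}$, hence $\frac{1}{(1-c)^2}$ after squaring, and the example above shows the single-power claim is simply not true in general. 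So your diagnosis of where the fight lies was accurate; the fight cannot be won as stated, and the correct repair is either to place the exponent $2$ on the denominator of the lemma, or to add an assumption such as $\gamma \ge \|\bA\|_2^2$, under which $\left((1-c)\bI + \gamma(\bA^\top\bA)^{-1}\right)^{-1} \preceq \frac{1}{\sqrt{1-c}}\left(\bI + \gamma(\bA^\top\bA)^{-1}\right)^{-1}$ holds eigenvalue-by-eigenvalue and restores the single power.
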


\begin{proof}
The proof Follows the strategy used by \citet{WGM18} for the Hessian Sketch variance bound.
\begin{align*}
    \cV (\hat \bx_\gamma)
    =& \mathbb E_\iZ \left[\|\bA \left(\hat \bx_\gamma - \mathbb E_\iZ \left[ \hat \bx_\gamma \right]\right)\|^2 \right]\\
    =& \mathbb E_\iZ \left[\|\bA \left((\bC^\top \bC + \gamma \bI)^{-1} \bA^\top s \iZ\right)\|^2 \right]\\
    =& s^2\|\bA (\bC^\top \bC + \gamma \bI)^{-1} \bA^\top\|_F^2\\
    =& s^2\|({\bA^{+}}^{\top}\bC^\top \bC \bA^{+} + \gamma (\bA^\top \bA)^{-1})^{-1}\|_F^2\\
    \le& \frac{1}{1-\|\bA^{+}\|^2\|(\bC^\top \bC - \bA^\top \bA)\|_2} s^2 \|(\bI + \gamma (\bA^\top \bA)^{-1})^{-1} \|_F^2\\
    =& \frac{1}{1-\|\bA^{+}\|^2\|(\bC^\top \bC - \bA^\top \bA)\|_2} s^2\|\bA (\bA^\top \bA+\gamma \bI)^{-1} \bA^\top\|_F^2\\
    =& \frac{1}{1-\|\bA^{+}\|^2\|(\bC^\top \bC - \bA^\top \bA)\|_2} \cV (\bx_\gamma).
\end{align*}
The inequality follows
\begin{align*}
    & \|{\bA^{+}}^{\top}\bC^\top \bC \bA^{+} - \bI\|_2 \\
    =& \|{\bA^{+}}^{\top}\bC^\top \bC \bA^{+} - {\bA^{+}}^{\top}\bA^\top \bA \bA^{+}\|_2\\
    =& \|{\bA^{+}}^{\top}(\bC^\top \bC - \bA^\top \bA) \bA^{+}\|_2\\
    \le& \|\bA^{+}\|^2\|(\bC^\top \bC - \bA^\top \bA)\|_2.
\end{align*}
\end{proof}

\vfill
\end{document}